\newcommand{\scenario}{\sigma}
\newcommand{\scenarioset}{\Sigma}
\newcommand{\bgpop}{\mathcal{B}}
\newcommand{\states}{\mathcal{S}}
\newcommand{\actions}{\mathcal{A}}
\newcommand{\policies}{\Pi}
\newcommand{\deterministicpolicies}{\policies^\text{D}}
\newcommand{\observations}{\mathcal{X}}
\newcommand{\rewardfunc}{\rho}
\newcommand{\transfunc}{P}
\newcommand{\horizon}{T}
\newcommand{\mdp}{\mu}
\newcommand{\afocal}{\mathbf{a}^f}
\newcommand{\abackground}{\mathbf{a}^b}
\newcommand{\perf}{U_\text{avg}}
\newcommand{\wcu}{U_\text{min}}
\newcommand{\wcr}{R_\text{max}}
\newcommand{\obsfunc}{\mathcal{O}}
\newcommand{\poptest}{\bgpop^\text{test}}
\newcommand{\poptrain}{\bgpop^\text{train}}
\newcommand{\scenariotest}{\scenarioset^\text{test}}
\newcommand{\rmax}{\lVert \rewardfunc \rVert_\infty}
\newcommand*{\defeq}{\coloneqq}
\def\multiset#1#2{\ensuremath{\left(\kern-.3em\left(\genfrac{}{}{0pt}{}{#1}{#2}\right)\kern-.3em\right)}}
\newcommand*\diff{\mathop{}\!\mathrm{d}}
\newenvironment{proofsketch}{%
\proof}{\endproof}
\newcommand{\bpi}{\boldsymbol{\pi}}
\newcommand{\minuscule}{\fontsize{4}{6}\selectfont}
\newcommand{\little}{\fontsize{7}{6}\selectfont}
\definecolor{carminepink}{rgb}{0.76, 0.2, 0.26}
\definecolor{celestialblue}{rgb}{0.10, 0.29, 0.62}
\title[]{A Minimax Approach to Ad Hoc Teamwork}
\author{Victor Villin}
\affiliation{
  \institution{Universit\'{e} de Neuch\^{a}tel}
  \city{Neuch\^{a}tel}
  \country{Switzerland}}
\email{victor.villin@unine.ch}
\author{Thomas Kleine Buening}
\affiliation{
  \institution{The Alan Turing Institute}
  \city{London}
  \country{United Kingdom}}
\email{tbuening@turing.ac.uk}
\author{Christos Dimitrakakis}
\affiliation{
  \institution{Universit\'{e} de Neuch\^{a}tel}
  \city{Neuch\^{a}tel}
  \country{Switzerland}}
\email{christos.dimitrakakis@unine.ch}
\begin{abstract}
We propose a minimax-Bayes approach to Ad Hoc Teamwork (AHT) that optimizes policies against an adversarial prior over partners, explicitly accounting for uncertainty about partners at time of deployment. Unlike existing methods that assume a specific distribution over partners, our approach improves worst-case performance guarantees. Extensive experiments, including evaluations on coordinated cooking tasks from the Melting Pot suite, show our method's superior robustness compared to self-play, fictitious play, and best response learning.  
Our work highlights the importance of selecting an appropriate training distribution over teammates to achieve robustness in AHT.
\end{abstract}
\keywords{Multi-Agent Reinforcement Learning; Ad Hoc Teamwork}
\newcommand{\BibTeX}{\rm B\kern-.05em{\sc i\kern-.025em b}\kern-.08em\TeX}
\begin{document}

%%% The following commands remove the headers in your paper. For final 
%%% papers, these will be inserted during the pagination process.

\pagestyle{fancy}
\fancyhead{}

%%% The next command prints the information defined in the preamble.

\maketitle 

%%%%%%%%%%%%%%%%%%%%%%%%%%%%%%%%%%%%%%%%%%%%%%%%%%%%%%%%%%%%%%%%%%%%%%%%

\section{Introduction}

% collaborative
% ad hoc evaluation
% partial observability
% possibly adaptive policies

% Learn and infer policy types from observations
% train policies to explicitly interact with their teammates in order to learn about their behaviours.
% or Adapting to Current Teammates, at test time, even when teammates switch policy

Domain generalisation is often crucial in Reinforcement Learning (RL) and is typically assessed by placing an agent in novel environments \citep{cobbe_quantifying_generalization_reinforcement_2019}. Likewise, in Multi-Agent Reinforcement Learning (MARL), generalisation to new agents can be evaluated by pairing a trained policy with unseen actors \citep{barrett_empirical_evaluation_ad_2011, hu_other_play_zero_2020, leibo_scalable_evaluation_multi_2021, agapiou_melting_pot_2_2023}. While zero-shot domain adaptation is a valuable property \citep{higgins_darla_improving_zero_2017, schafer_task_generalisation_multi_2022}, it is equally important to ensure proper transfer to new behaviours in multi-agent settings, especially in situations where undesired interactions may arise \citep{gleave_adversarial_policies_attacking_2019}. More specifically, Ad Hoc Teamwork (AHT) occurs when an agent, initially unfamiliar with its teammates, must collaborate to achieve a common goal. In a world where autonomous agents are being progressively introduced in such tasks, cooperation with humans is becoming a major concern \citep{stone_ad_hoc_autonomous_2010, ji_ai_alignement_comprehensive_2023}.

Efforts in AHT have primarily focused on learning and inferring behavioural models or teammates types \citep{barrett_empirical_evaluation_ad_2011, albrecht_empirical_study_practical_2015, barrett_making_friends_fly_2017, chen_aateam_achieving_ad_2020, muglich_generalized_beliefs_cooperative_2022}, adapting to behaviour shifts \citep{manish_ad_hoc_teamwork_2019}, and enhancing generalisation by encouraging diversity in partners during training \citep{jaderberg_human_level_performance_2019,hu_other_play_zero_2020, charakorn_investigating_partner_diversification_2020,lupu_trajectory_diversity_zero_2021, strouse_collaboration_with_humans_2021}. However, these methods provide limited guarantees on the worst-case AHT performance.

% a warehouse robot must cooperate with both other robots and human employees
%Imagine a scenario where a surgeon collaborates with an automated agent. Each surgeon has their own approach, spanning from a novice with cautious yet less precise actions, to a more seasoned professional prioritizing safety. Possibly however, there exists a renowned expert who occasionally takes calculated risks resulting in miraculous outcomes. With such diverse strategies in play, the automated agent must properly accommodate to any chosen approach.

%The nature of AHT tasks can encompass highly varied \emph{scenarios}, posing inherent risks. For instance, in autonomous driving, agents must navigate alongside human drivers which have diverse driving habits and other autonomous cars. Similarly, in cases like automated surgical collaboration, agents must adapt to a spectrum of (human) surgeon approaches, ranging from cautious novices to experienced professionals. It is in fact possible to think of those scenarios as separate (sometimes approximate) single-agent environments \citep{eriksson_risk_sensitive_bayesian_2022}, effectively modeling the problem as a composition of simpler problems that must be solved by the same agent.
A multi-agent system can encompass numerous and diverse \emph{scenarios}, each characterised by its actors. For example, autonomous cars operate alongside human drivers and other autonomous vehicles. Similarly, in a surgical setting, a robot may need to cooperate with surgeons who have a wide range of different habits and expertise. In each of these scenarios, we can adopt the perspective that the \emph{focal} actors are controlled by an automated agent, whereas the other actors are viewed as fixed, forming the \emph{background} of the task \citep{leibo_scalable_evaluation_multi_2021, agapiou_melting_pot_2_2023}. These scenarios can be viewed as distinct environments, as each combination of background actors induces different transition dynamics and reward functions. A common practice involves constructing representative scenarios and training a policy on a uniform distribution over them \citep{strouse_collaboration_with_humans_2021,lupu_trajectory_diversity_zero_2021}. However, this only optimises performance for that specific distribution.

Recent studies in zero-shot domain transfer showed that selecting an appropriate prior over training environments is key to learning robust policies \citep{pinto_robust_adversarial_reinforcement_2017, dennis_emergent_complexity_zero_2020, garcin_how_level_sampling_2023, jiang_prioritized_level_replay_2021,buening_minimax_bayes_reinforcement_2023, li_bayes_optimal_robust_2024}. Intuitively, this insight should apply to the AHT setting as well, suggesting that choosing a specific prior over scenarios/partners may improve the robustness of learned policies. Assuming that no information is available about the teammates at test time (and their distribution), we consider the \emph{worst} possible prior over the training set of partners given our policy, an idea adopted from the minimax-Bayes concept \citep{berger_statistical_decision_theory_1985}.
\paragraph{Contributions.} We make the following contributions: 
\begin{enumerate}[topsep=4pt, leftmargin=12pt]
    \item We adapt Minimax-Bayes Reinforcement Learning (MBRL) \citep{buening_minimax_bayes_reinforcement_2023} to the AHT setting, reasoning about uncertainty with respect to partners rather than environments (Section~\ref{seq:robust_aht}).
    \item We examine the advantages of using utility and regret for AHT robustness, and provide solutions to target either metric (Section~\ref{seq:utility_or_regret}).
    \item We study out-of-distribution robustness guarantees (Section~\ref{section:ood}).
    \item We propose a Gradient Descent-Ascent (GDA) \citep{lin_gradient_descent_ascent_2020} based algorithm, in conjunction with policy-gradient methods, and discuss its convergence for softmax policies (Section~\ref{seq:learning}).
    \item We conduct extensive experiments to evaluate our approach. We deploy learned policies on both seen and unseen scenarios for cooperative problems, including a partially observable cooking task from the Melting Pot suite \citep{leibo_scalable_evaluation_multi_2021, agapiou_melting_pot_2_2023}. We compare our approach against Self-Play (SP), Fictitious Play (FP) \citep{brown_iterative_solution_games_1951, heinrich_fictitious_self_play_2015} as well as learning a policy w.r.t.\ a fixed uniform distribution over scenarios \citep{lupu_trajectory_diversity_zero_2021}, which is related to fictitious co-play \citep{strouse_collaboration_with_humans_2021}, as both learn the best response to a population of policies (Section~\ref{seq:experiments}).
    \item Our results confirm the theory and empirically demonstrate that our approach leads to the most robust solutions for both simple and deep RL coordination tasks, even when teammates are adaptive. 
    %This highlights the importance of choosing an appropriate distribution over training partners/scenarios to find policies that better transfer to new teammates.
\end{enumerate}
\section{Related Work}

% MARL minimax q learning, minimax bayesian rl
% RL domain transfer
% mention everything related, as much as possible
% 

\paragraph{Ad Hoc Teamwork.}
In AHT, we are interested in developing agents capable of cooperating with other unfamiliar agents without any form of prior coordination \citep{rovatsos_towards_social_complexity_2002,stone_ad_hoc_autonomous_2010, barrett_empirical_evaluation_ad_2011, barrett_making_friends_fly_2017}. Popular approaches usually involve some form of Population Play (PP), where policies forming a population are learning by interacting with each other \citep{lupu_trajectory_diversity_zero_2021, muglich_equivariant_networks_zero_2022, leibo_scalable_evaluation_multi_2021, agapiou_melting_pot_2_2023}. 
% Describe methods using PP which do not play around distributions
Key strategies for ensuring generalisation to new partners include promoting policy diversity within the training population \citep{charakorn_investigating_partner_diversification_2020} and preventing overfitting to training partners \citep{lanctot_unified_game_theoretic_2017}. Both \citet{lupu_trajectory_diversity_zero_2021} and \citet{strouse_collaboration_with_humans_2021} previously showed that learning a best response to a more diverse population leads to improved generalisation. Additionally, \citet{jaderberg_human_level_performance_2019} showed the effectiveness of PP when diversity is encouraged through evolving pseudo-rewards. However, PP still struggles with producing policies that are robustly collaborative with new partners and sometimes exhibits overfitting \citep{carroll_utility_learning_about_2019,leibo_scalable_evaluation_multi_2021,agapiou_melting_pot_2_2023}.

To further improve AHT, several works suggest inferring the teammates' models/types, maintaining a belief about ad hoc partners based on previous interactions within an episode \citep{barrett_empirical_evaluation_ad_2011, albrecht_empirical_study_practical_2015}. This was shown to help substantially in the partially observable setting \citep{gu_online_ad_hoc_2021, ribeiro_making_friends_dark_2023, dodampegama_knowledge_based_reasoning_2023}. Efforts have also been made to improve the learning and generalisation of such models to new partners \citep{barrett_cooperating_unknown_teammates_2015, barrett_making_friends_fly_2017, muglich_generalized_beliefs_cooperative_2022}.

% As AHT typically involves partial observability, recent advances also focused on how to be robust in that setting 
% (gu) introduce an information-based regularizer to derive proxy representations of the learned variables from local observation
% (ribeiro) capable of inferring the underlying target task and teammate’s behaviour from the agent’s history of observations
% (dodampegama) Knowledge-based Reasoning

%In many collaborative settings, multiple joint-strategies that are optimal but incompatible exist. Mechanisms preventing policies from converging to one of many equivalent but incompatible policies are also a direction that has been taken to improve AHT \citep{hu_other_play_zero_2020, muglich_equivariant_networks_zero_2022, lanctot_unified_game_theoretic_2017}.

An alternative approach proposed by \citet{li_robust_multi_agent_2019} involves a robust formulation of deep deterministic policy gradients, assuming worst-case teammates. Unlike our setup, they train a joint policy that remains consistent throughout learning, and design their algorithm specifically for deep deterministic policy gradients, whereas our approach is compatible with any policy-gradient algorithm.

Even though the aforementioned methods attempt to improve cooperative robustness, they always assume specific distributions for the partners. For example, \citet{jaderberg_human_level_performance_2019} used a distribution favoring the matchmaking of policies of similar levels with the intuition that the reward signal is stronger in those cases. However, it does not provide any insights on its effects on AHT robustness. As a result, the actual impact of the training partner distribution on robustness is left under-explored. This holds significant potential, as it can be exploited in conjunction with previously studied mechanisms to substantially enhance AHT robustness.

%A good amount of work evaluate their policies with held-out partners that are

%\citet{muglich_equivariant_networks_zero_2022} previously showed that it is possible to constraint policies to be equivariant, meaning that they are prevented to arbitrarily converge to one of many equivalent but mutually incompatible policies.

% FCP

% Transfer to Humans
%(We do not evaluate our policies with humans. We however evaluate our policies on held-out scenarios composed of policies of identical architectures.)

\paragraph{Zero-shot Domain Transfer.}
AHT can be seen as a form of zero-shot domain transfer. Each possible team composition involving the focal agent can be considered a different environment. In the single-agent setting, \citet{jiang_prioritized_level_replay_2021} demonstrated that adapting the training environment distribution by prioritising environments with higher prediction loss (a measure of the policy's lack of knowledge) leads to improved sample efficiency and generalisation. Building on this idea, \citet{garcin_how_level_sampling_2023} prioritised environments where the mutual information between the learning policy's internal representation and the environment identity was lower, using information theory to achieve similar results. The idea of tampering with the environment distribution was also explored by \citet{pinto_robust_adversarial_reinforcement_2017}, who employed a maximin utility formulation to choose continuous adversarial environment perturbations throughout learning. Instead of utility, \citet{dennis_emergent_complexity_zero_2020} stressed the advantages of using regret by proposing a training environment sampling scheme avoiding entirely unsolvable and uninformative environments. Most relevant to this work,  \citet{buening_minimax_bayes_reinforcement_2023} conducted a study over worst-case priors (for both utility and regret) over training environments, and proved that worst-case distributions are well-suited for domain transfer. \citet{li_bayes_optimal_robust_2024} later reaffirmed those results, learning worst-case distributions within ambiguity sets of subjective priors. Finally, there exist works on domain transfer in the MARL setting \citep{schafer_task_generalisation_multi_2022}, but this differs from our focus on transferring to new partners. This related work is consistently in favor of caring about environment distributions for robustness, providing strong motivation to bring this concern to AHT. 

\section{Problem Formulation}

% TODO : use macros

\subsection{Preliminaries}

An $m$-player Partially Observable Markov Game (POMG) is given by a tuple $\mdp = \langle \states, \observations, \actions, \obsfunc, \transfunc, \rewardfunc , \horizon \rangle$ defined on finite sets of states $\states$,
observations $\observations$ and actions $\actions$. The observation function $\obsfunc: \states \times \{1, \dots, m\} \rightarrow \observations$ provides a state space view for each player. In each state, each player $i$ chooses an action $a_i \in \actions$. Following their joint action $\mathbf{a} = (a_1, \dots, a_m) \in \actions^m$, the state is updated according to the transition function $P: \states \times \actions^m \rightarrow \Delta(\states)$. After a transition, each player receives a reward defined by $\rewardfunc: \states \times \actions^m \times \{1, \dots, m\} \rightarrow \mathbb{R}$. The game ends after $\horizon$ transitions. Permuting player indices does not have any effect on $\mdp$. We denote $\rmax$ the maximum absolute step reward.
    
A policy $\pi: \observations \times \actions \times \observations \times \actions \times \dots \times \observations \rightarrow \Delta(\actions)$ is a probability distribution over a single agent's actions, conditioned on that agent's history of observations and actions. We denote $\policies$ the set of all policies and $\deterministicpolicies \subset \policies$ the set of deterministic policies.

\subsection{Scenarios}
% Why not notation: $\bpi^b = (\pi^b_1, \dots, \pi^b_{m-c}) \in \Pi^{m-c}$ for background policies and $\bpi^f = (\pi^f, \dots, \pi^f)$ copies of policy $\pi^f$. 
Let a \emph{scenario} $\scenario=(c, \bpi^b)$ be defined by its number of \emph{focal} players $c$, and its \emph{background} players $\bpi^b = (\pi^b_1, \dots, \pi^b_{m-c}) \in \policies^{m-c}$. We say we deploy a policy $\pi^f$ in scenario $\scenario$ if the $c$ focal players are equal to $\pi^f$. Hence, in addition to the $m-c$ many background policies $\bpi^b$, there are $c$ many focal policies $\bpi^f = (\pi^f, \dots, \pi^f)$. We denote $ \afocal \in \actions^c$ and $\abackground \in \actions^{m-c}$ the joint actions of the focal and background players, respectively.
A background population $\bgpop \subset \policies$ is a finite set of policies, to which we assign a set of scenarios:\footnote{The definition of background populations in \eqref{eq:background_population} is largely inspired from the work of \citet{leibo_scalable_evaluation_multi_2021} and \citet{agapiou_melting_pot_2_2023}. This is the most general formulation of the problem and will be used as is throughout this work. Depending on the game $\mdp$, a restricted set may make more sense, such as limiting to $c=1$.}
\begin{equation}
\label{eq:background_population}
\scenarioset(\bgpop) \defeq \{(c, \bpi^b) \mid 1 \leq c \leq m, \bpi^b \in \bgpop^{m-c}\}.
\end{equation}
A scenario $\scenario$ on $\mdp$ can be viewed as its own $c$-player POMG, through the marginalisation of the policies of its background players.\footnote{Each scenario can be seen as a decentralised partially observable Markov decision process \citep{oliehoek_decentralized_pomdps_2012} constrained by the fact that the $c$ players are copies.} We denote $\mdp(\scenario) = \langle \states, \observations, \actions, \obsfunc_\scenario, \transfunc_\scenario, \rewardfunc_\scenario, \gamma ,\horizon \rangle$ the POMG induced by scenario $\scenario$, where $\obsfunc_\scenario: \states \times \{1, \dots, c\} \rightarrow \observations $ is the corresponding observation function, $P_\scenario: \states \times \actions^c \rightarrow \Delta(\states)$ is the transition function given by  
\begin{equation*}
P_{\scenario}(s' \mid s, \afocal) =
\begin{cases}
P(s' \mid s, \afocal), & c = m \\
\sum_{\abackground} \Big(P(s' \mid s, \afocal, \abackground) 
\prod_{i} \bpi^b_i (\abackground_i \mid h_i)\Big), & c < m \\
\end{cases}
\end{equation*}
and $\rho_\scenario: \states \times \actions^c \times \{1, \dots, c\} \rightarrow \mathbb{R}$ is the induced reward function with:
\begin{equation*}
\rho_\scenario(s, \afocal, i) =
\begin{cases}
\rho(s, \afocal, i), & c = m\\
\sum_{\abackground} \Big( \rho(s, \afocal, \abackground, i)
\prod_{j} \bpi^b_{j}(\abackground_j |h_j)\Big), & c < m \\
\end{cases}
\end{equation*}
where $h_j$ is the history of observations and actions of the $j$-th policy and $\abackground_j$ its action in $\abackground$. We denote the scenario that only involves the focal policy, i.e. the universalisation scenario~ \citep{leibo_scalable_evaluation_multi_2021}, by $\scenario^\text{SP} = (m, \emptyset)$. This setup is closely related to N-agent AHT \citep{wang_n_agent_ad_2024}, in that the proportion of focal/background players can vary and is not known in advance.
%A test scenario $\scenario^\text{test}(c, \mathbf{b})$ is characterized by its background players, which are uniquely drawn from a background population $\mathcal{B}^\text{test}$.

\subsection{Evaluation}
The expected utility of a policy $\pi$ in scenario $\scenario$ is the mean return of the focal policies given by the expected \emph{focal-per-capita return} \citep{leibo_scalable_evaluation_multi_2021, agapiou_melting_pot_2_2023}:
\begin{equation}
    \label{eq:EU}
    %U_t(\pi, \scenario, s) \defeq 
    U(\pi, \scenario) \defeq 
    \sum_{t=1}^T  \dfrac{1}{c} \sum_{i=1}^c \mathbb{E}^{\pi}_{\mdp(\scenario)}\left[\rho_\scenario(s_t, \afocal_t, i) \right].
\end{equation}
% Do you ever need $u(\pi, \sigma)$ ?
%where $r_{t,i}=\rho_\scenario(s_t, \afocal_t, i)$ is the reward collected by the $i$-th player at timestep $t$.
% TODO : Justify this definition for utility, how is it different from single agent rewards.
%This definition for utility rates higher policies that maximize the mean utility of its copies in the scenario, which is slightly different from maximizing one-self's return. This entails that a good policy w.r.t \eqref{eq:EU} differentiates
$U^*(\scenario) \defeq \max_{\pi\in \Pi} U(\pi, \scenario)$ denotes the maximal utility achievable in scenario $\scenario$. This definition for utility represents the need for autonomous agents to always maximise the mean joint rewards of its copies, regardless of the scenario.
We can further define the notion of regret incurred by deploying some policy $\pi$ on scenario $\scenario$, as the gap between the maximal utility and the utility of $\pi$ on $\scenario$:
\begin{equation}
    \label{eq:regret}
    R(\pi,\scenario) \defeq U^*(\scenario) - U(\pi, \scenario).
\end{equation}
%The best response utility $U^*(\scenario)$ can be obtained through an oracle that knows the scenario.
To assess a learning method in terms of AHT, we use the evaluation protocol of \citet{leibo_scalable_evaluation_multi_2021}. This has two phases:
\begin{enumerate}[leftmargin=12pt]
    \item \textbf{Training phase}: A background population $\poptest$ is kept hidden. The policy learner has access to the game $\mdp$ with no restrictions, apart from accessing $\poptest$. For example, the learner is free to use a modified instance $\mdp'$ of $\mdp$, where the observation function, $\obsfunc$, may be adjusted to include information about other players, or where the reward function, $\rewardfunc$, could be altered to provide joint rewards instead of individual ones.
    \item \textbf{Testing phase}: The obtained policy is fixed and cannot be trained any further. We compute the performance of the policy on $\mdp$ by taking its average expected utility across a series of unseen test scenarios $\scenariotest \subset \scenarioset(\poptest)$:
  \begin{equation}
        \label{eq:metrics.performance}
        \perf(\pi, \scenarioset) \defeq \dfrac{1}{|\scenarioset|}\sum_{\scenario \in \scenarioset} U(\pi, \scenario),
    \end{equation}
    In addition, we consider two metrics related to robustness, namely worst-case utility and worst-case regret:
    %\footnote{Actually, the "universalization test" $\scenario(m, \emptyset)$) is included in the test scenarios in \cite{leibo_scalable_evaluation_multi_2021}, which evaluates the performance of the policy when it only plays with copies of itself. For simplicity, we assume we know nothing at all about $\Sigma^\text{test}$.}:
    %\begin{equation}
    %    \label{eq:metric.performance}
    %    \perf(\pi, \scenarioset) = \dfrac{1}{|%\scenarioset|}\sum_{\scenario \in \scenarioset} U(\pi, \scenario).
    %\end{equation}
    \begin{equation}
        \label{eq:metrics.utiliy}
        \wcu(\pi, \scenarioset) \defeq \min_{\scenario \in \scenarioset} U(\pi, \scenario), \quad\;\;  \wcr(\pi, \scenarioset) \defeq \max_{\scenario \in \scenarioset} R(\pi, \scenario).
    \end{equation}
    Maximising $\wcu$ is typically preferable when falling below a certain utility threshold must be avoided at all costs; for instance minimising casualties in a surgical context. Conversely, minimising $\wcr$ avoids decisions that lead to significantly worse utility than the optimal utility. 
\end{enumerate}
The final objective is to design a learning process outputting a policy that reliably maximises utility or minimises regret on possibly unseen scenarios. 

\subsection{General Assumptions}
\label{sec:assumptions}
To ensure our setting aligns with the AHT literature, we must adhere to three assumptions \citep{mirsky_survey_ad_hoc_2022}: a) the absence of prior coordination. The learner must be capable of cooperating with the team on-the-fly, without relying on previously established collaboration strategies. b) There is no control over teammates, the learner can control its own copies but not other agents in the configuration. c) All agents (focal and background) are assumed to have a partially shared objective. Their reward function may be slightly different, reflecting varying preferences. In this work, we choose to address this last point by assuming a class of possible reward functions for the background players. 
%\paragraph{Availability of pre-trained policies.} We assume that some pre-trained policies $\poptrain$ are available. We consider $\poptrain$ as an ingredient, that can be exploited by any learning method. This allows to restrict evaluation to the process of exploiting data available at train time. $\poptrain$ could be obtained trough any appropriate method. [Cite].

\section{Achieving Robust AHT}
\label{seq:robust_aht}

% Motivation : Learning against a uniform distribution only guarantees best performance on that exact distribution of scenarios.

To learn a policy able to cooperate with new partners, a straightforward idea is to reconstruct scenarios that would be encountered in nature. A roadblock to this approach however is that it requires two main ingredients: a) a diverse pool of partners, and b) a prior distribution over them. The prior, often neglected, is important as it captures our uncertainty about the true partners observed in nature.

In Section~\ref{subsec:constructing_training_scenarios}, we reflect on motivating previous work on diverse behaviour generation, before describing our own adopted approach. Section~\ref{subsec:minimax_bayes_aht} then introduces the Minimax-Bayes idea to AHT, by stating the connections of our setting to Minimax-Bayes Reinforcement Learning (MBRL).

%For now, we assume we have access to such a pretrained background population. The training of a background population is a key part of the whole learning process, it will be considered in later stages of this work.
\subsection{Constructing Training Scenarios}
\label{subsec:constructing_training_scenarios}

% learning against mixture of policies is not the same as learning against a distribution of policies.

%To illustrate with an example, in a setting where company coworkers have to realise a project, there might be workers that have a high preference for their own contribution (with a better chance to get promoted later), while there may be others that are inclined to delegate their work for things they are unsure about to the team. 
Before learning any robust policy, we need to construct a diverse set of scenarios. A background population that encompasses a wide range of behaviours is needed in order to reconstruct scenarios existing in nature. Previous work on AHT tackled the issue in various manners, such as using genetic algorithms \citep{muglich_generalized_beliefs_cooperative_2022}, rule-based policies generated with MAP Elites \citep{canaan_generating_adapting_diverse_2023}, SP policies \citep{strouse_collaboration_with_humans_2021}, explicit behavior diversification through regularisation \citep{lupu_trajectory_diversity_zero_2021}, or through evolved pseudo-rewards \citep{jaderberg_human_level_performance_2019}. Based on real-life examples and aiming to thoroughly assess the effects of partner priors, we adopt the following approach:
\begin{itemize}[leftmargin=12pt]
    \item We assume a class of reward functions for background policies:
    \begin{equation*}
        \rho_\text{social+risk} (s, \mathbf{a}, i) = \rho_\text{social}^+ (s, \mathbf{a}, i) - \delta_i \rho_\text{social}^- (s, \mathbf{a}, i),
    \end{equation*}
    with $\rewardfunc_\text{social}$ defined as
    \begin{equation*}
        \rewardfunc_\text{social}(s, \mathbf{a}, i) = \lambda_i \rho(s, \mathbf{a}, i) + (1-\lambda_i) \sum_{j=1}^m \rho(s, \mathbf{a}, j),
    \end{equation*}
    where $\rho^+$ and $\rho^-$ are the positive and negative parts of $\rho$, and $\lambda_i$ and $\delta_i$ denoting levels of prosociality \citep{peysakhovich_prosocial_learning_agents_2017} and risk-aversion, respectively. In other words, each background policy has their own preferences ($\lambda_k, \delta_k$).
    %Combining values of prosociality and risk-aversion allows for the consideration of behaviours with a wide range of preferences.
    \item Policies are organised into sub-populations $\bgpop = \bigcup_k \bgpop_k$ of varying sizes.
    \item Each sub-populations are separately trained using PP.
\end{itemize}
Given the diverse preferences and varying sizes of the sub-populations, distinct habits and established conventions are more likely to emerge from each group \cite{strouse_collaboration_with_humans_2021}. This choice for constructing scenarios ensures a diverse generation of scenarios, important to ablate the effects of various scenario priors on AHT robustness.
Note that this choice for constructing scenarios remains arbitrary and is not the main focus of our work.
%We could also introduce policies of different skill levels, i.e. fully trained or early stage policies, but decide to k

\begin{figure}
    \centering
    \includegraphics[width=0.47\textwidth]{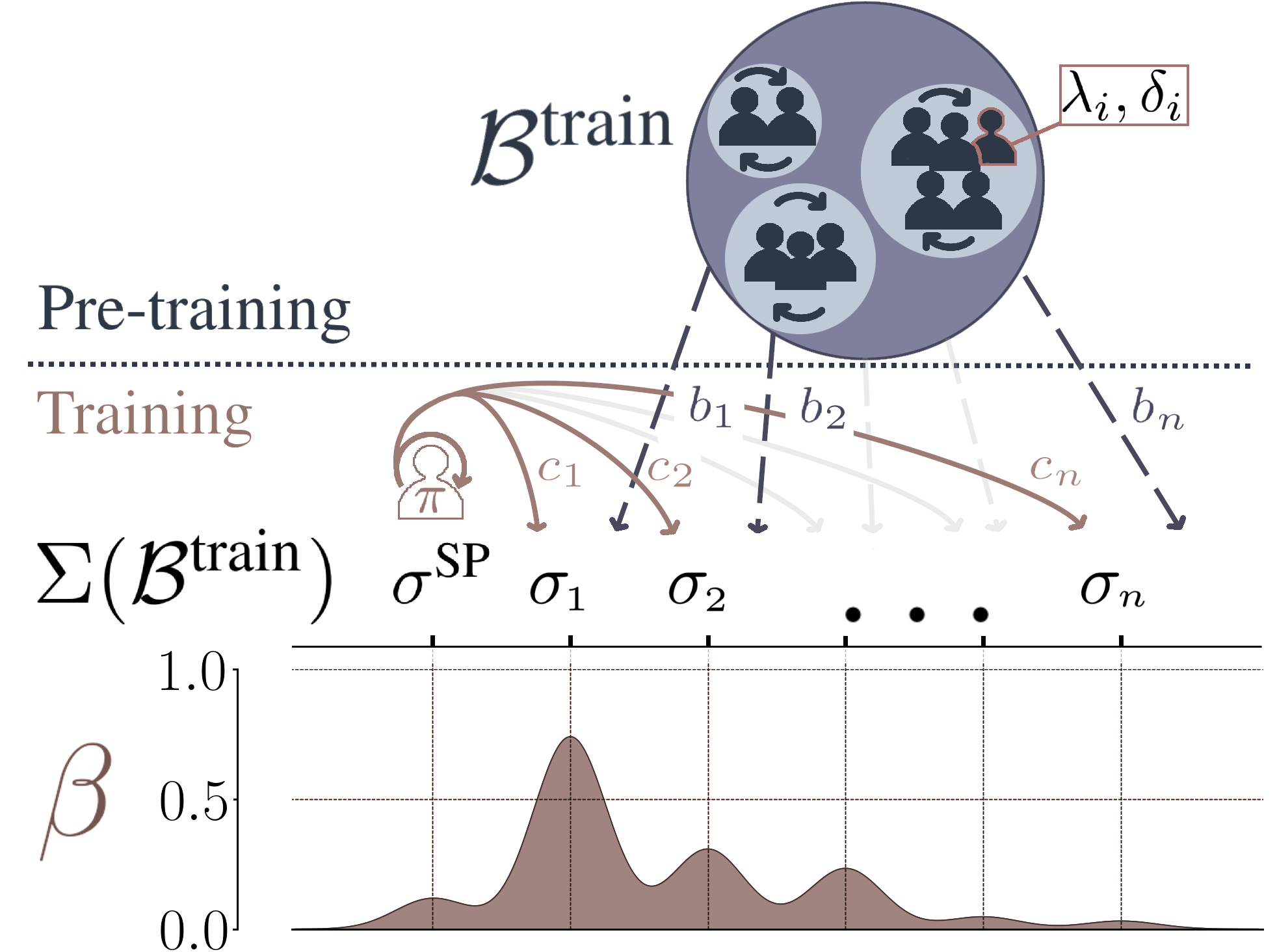}
    \caption{Illustration of the framework used in this paper. Prior to training the focal policy $\pi$, background policies with different preferences ($\lambda_i, \delta_i$) learn by interacting within sub-populations of varying sizes. These sub-populations are then combined to form a background population, $\poptrain$, used as a common ‘training dataset’ for all algorithms.\\ 
    Our primary focus is on the training phase, where the focal policy $\pi$ is trained while the distribution $\beta$ over scenarios is tuned according to the proposed minimax game. These scenarios mix copies of $\pi$ with policies from $\poptrain$, where the self-play scenario $\scenario^\text{SP}$ has the policy interacting only with copies of itself. }
    \label{fig:main_illustration}
\end{figure}

\subsection{Minimax-Bayes AHT}
\label{subsec:minimax_bayes_aht}

In the standard single-agent Bayesian RL setting, the learner selects a subjective belief $\beta$ over candidate Markov Decision Processes (MDPs) $\mathcal{M}$ for the unknown, true environment $\mdp^* \in \mathcal{M}$. The learner's objective is to maximise its expected expected utility with respect to the chosen prior $U(\pi, \beta) = \int_\mathcal{M} U(\pi, \mdp) \diff \beta(\mdp)$, i.e. finding the Bayes-optimal policy. In MBRL, \citet{buening_minimax_bayes_reinforcement_2023} proposed considering the worst possible prior for the agent, without knowledge of the policy that will be chosen. This approach can be interpreted as nature playing the minimising player against the policy learner in a simultaneous-move zero-sum normal-form game. Learning against a worst-case prior intuitively makes the policy more robust, as it prepares for the worst outcomes.

To transfer this idea to our setting, we remark that any finite background population $\bgpop$ provides a finite set of POMGs $\mathcal{M}_\bgpop = \{\mdp(\scenario) | \scenario \in \scenarioset(\bgpop)\}$. The principal difference here is the use of POMGs rather than MDPs. We extend the notion of expected utility with respect to a prior over scenarios, i.e. when $\beta \in \Delta(\scenarioset(\bgpop))$:
\begin{equation}
    U(\pi, \beta) \defeq \mathbb{E}_{\scenario \sim \beta}[U(\pi, \scenario)] = \sum_{\scenario} U(\pi, \scenario) \beta(\scenario).
\end{equation}
This allows us to formulate the following maximin game:\footnote{
If we have a subjective prior, we could learn the distribution within an $\epsilon$-ball around that prior \citep{li_bayes_optimal_robust_2024}. We however consider the full simplex for simplicity.
}
\begin{equation}
    \label{eq:mbmarl.maximin}
    \max_{\pi \in \policies} \min_{\beta \in \Delta(\scenarioset(\bgpop))} U(\pi, \beta).
\end{equation}
Similarly to \citet{buening_minimax_bayes_reinforcement_2023}, we are interested in knowing whether such a game has a solution (i.e., a value), assuming that nature and the agent play simultaneously without knowledge of each other's move. This is relevant in our setting because the policy learner does not know the true distribution of partners available in nature, while nature's distribution does not depend on the policy that will be picked. Fortunately, \eqref{eq:mbmarl.maximin} has a value when $\bgpop$ is finite.
\begin{corollary}[\citet{buening_minimax_bayes_reinforcement_2023}]
For an $m$-player POMG $\mdp$ in a finite state-action space, with a known reward function and a finite horizon, and a background population $\bgpop$, the maximin game \eqref{eq:mbmarl.maximin} has a value:
\begin{equation}
    \label{eq:maximin_value}
    \max_{\pi \in \policies} \min_{\beta \in \Delta(\scenarioset(\bgpop))} U(\pi, \beta) = \min_{\beta \in \Delta(\scenarioset(\bgpop))} \max_{\pi \in \policies} U(\pi, \beta).
\end{equation}
\end{corollary}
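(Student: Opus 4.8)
The plan is to read \eqref{eq:maximin_value} as a strong-duality (minimax) statement for a zero-sum game between the policy learner and nature, and to obtain it from a classical minimax theorem once the game has been put into bilinear form. Weak duality, $\max_\pi \min_\beta U(\pi,\beta) \le \min_\beta \max_\pi U(\pi,\beta)$, holds for free, so the content is the reverse inequality. The easy side of the hypotheses concerns nature: since $\bgpop$ is finite, $\scenarioset(\bgpop)$ is finite and $\Delta(\scenarioset(\bgpop))$ is a compact, convex simplex, and by definition $U(\pi,\beta)=\sum_\scenario U(\pi,\scenario)\beta(\scenario)$ is affine—hence simultaneously convex, concave, and continuous—in $\beta$.

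The obstacle is the policy player. The map $\pi \mapsto U(\pi,\scenario)$ is a polynomial, and in general not concave, in the behavioural-policy probabilities; moreover, in scenarios with $c>1$ focal players the $c$ identical copies couple multiplicatively, so neither concavity nor the usual occupancy-measure linearisation is available directly. I would resolve this by convexifying on the learner's side. Because $\horizon$, $\observations$, and $\actions$ are finite there are finitely many histories, so the set of deterministic policies $\deterministicpolicies$ is finite; I then lift the learner's strategy to a mixture $q \in \Delta(\deterministicpolicies)$ deployed in a \emph{correlated} fashion—a single deterministic policy is drawn from $q$ and used by all $c$ focal copies. Under this deployment the expected utility is the average of the pure-policy utilities, yielding the bilinear form $U(q,\beta)=\sum_{\pi,\scenario} q(\pi)\,\beta(\scenario)\,U(\pi,\scenario)$ on a product of two finite-dimensional simplices.

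With the game in bilinear form on a product of compact convex simplices, von Neumann's minimax theorem (equivalently Sion's, using affineness in each argument together with compactness) gives $\max_q \min_\beta U(q,\beta) = \min_\beta \max_q U(q,\beta)$, with all extrema attained. It then remains to identify these mixed-game values with the quantities in \eqref{eq:maximin_value}: for a fixed $\beta$ the inner maximisation can be cast as a finite-horizon decision problem admitting a deterministic optimum, so that $\max_{\pi\in\policies} U(\pi,\beta)=\max_{\pi\in\deterministicpolicies}U(\pi,\beta)=\max_{q\in\Delta(\deterministicpolicies)}U(q,\beta)$, which pins down the right-hand side, while the left-hand (maximin) side is the value of the same lifted game. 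I expect the correlated-mixing convexification—in particular verifying that it faithfully represents the learner's robust strategy despite the identical-copy coupling when $c>1$—to be the crux, whereas finiteness, compactness, and the invocation of the minimax theorem are routine. Since the resulting object is exactly the finite-POMG instance of the finite-environment setting of \citet{buening_minimax_bayes_reinforcement_2023}, the claim follows as a corollary of their minimax-Bayes result.
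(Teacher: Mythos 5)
Your lifted bilinear game on $\Delta(\deterministicpolicies)\times\Delta(\scenarioset(\bgpop))$ and the appeal to von Neumann's theorem are fine, and you correctly locate the crux in the identical-copy coupling; but the identification step you propose to close the argument is false, and this is a genuine gap. The claim that for fixed $\beta$ the inner maximisation ``admits a deterministic optimum'', i.e.\ $\max_{\pi\in\policies}U(\pi,\beta)=\max_{\pi\in\deterministicpolicies}U(\pi,\beta)=\max_{q}U(q,\beta)$, fails whenever $\beta$ puts mass on a scenario with $c>1$ focal copies: the problem is then a decentralised control problem under the constraint that all $c$ controllers are \emph{identical and randomise independently} (the paper's own footnote calls this out), and under that symmetry constraint randomisation can be strictly necessary. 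Concretely, take $m=2$, $T=1$, one state, actions $\{a,b\}$, reward $1$ to each player iff the two actions differ, and $\beta$ the Dirac on $\scenario^{\text{SP}}=(2,\emptyset)$ (which always lies in $\scenarioset(\bgpop)$): every deterministic policy has utility $0$ since its two copies always match, while the uniform policy has utility $\nicefrac{1}{2}$, so $\max_q U(q,\beta)=\max_{d}U(d,\beta)=0\neq\nicefrac{1}{2}=\max_{\pi}U(\pi,\beta)$. The left-hand identification breaks for the same reason: the uniform policy's utility vector is not attainable by any correlated mixture of deterministic policies, and conversely. Indeed, adding the background policy ``always play $a$'' to $\bgpop$, the original game has value $\nicefrac{1}{2}$ while your lifted game has value $0$ --- so the correlated-mixture game is genuinely a \emph{different} game, not a faithful convexification, and no identification argument can repair this. (Note this example does not falsify \eqref{eq:maximin_value} itself, whose two sides both equal $\nicefrac12$ there; it kills the proof route. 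With independent copies, $U(\pi,\scenario)$ is the diagonal of a symmetric degree-$c$ multilinear form in the Kuhn mixture of $\pi$, not a linear function of it.)

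For comparison, the paper's own proof takes exactly your route but without your caveat: it asserts outright that $U(\pi,\beta)=\sum_{d}\sum_{\scenario}U(d,\scenario)\phi(d)\beta(\scenario)$ for the Kuhn mixture $\phi$ of $\pi$, which is precisely the identity that fails in the example above --- it is valid for $c=1$ scenarios by Kuhn's theorem, but not for scenarios containing several focal copies, in particular not for $\scenario^{\text{SP}}$. So your instinct that ``verifying that the convexification faithfully represents the learner's strategy despite the identical-copy coupling'' is the crux is exactly right, and your write-up makes the paper's hidden assumption explicit; but the deterministic-optimum patch is provably unworkable. A complete proof of the corollary for general $\scenarioset(\bgpop)$ would require either restricting to $c=1$ scenarios (where both your argument and the paper's become correct), changing the deployment semantics so that the focal copies share the random draw of a deterministic policy (which changes the definition of $U$), or a genuinely different argument on the policy side that handles the symmetric polynomial dependence of $U(\pi,\scenario)$ on $\pi$.
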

\begin{proof}
First, observe that for any stochastic policy $\pi \in \policies$, there exists a distribution over deterministic policies $\phi \in \Delta(\deterministicpolicies)$ such that $\pi(a_t|h_t) = \sum_{d \in \deterministicpolicies} d(a_t|h_t) \phi(d)$. Consequently, we can rewrite the utility as $U(\pi, \beta) = \sum_{d \in \deterministicpolicies} \sum_{\sigma \in \scenarioset(\bgpop)} U(d, \sigma) \phi(d) \beta(\sigma)$. This demonstrates that $U$ is bilinear in $\phi$ and $\beta$, which allows us to apply the minimax theorem, thus proving the result.
\end{proof}
Importantly, prior work that chooses arbitrarily a fixed prior is limited in terms of robustness guarantees: it only ensures maximal utility for their specific prior. In contrast, a policy $\pi^*_U$ solving the maximin utility problem \eqref{eq:mbmarl.maximin} has its utility lower-bounded on $\scenarioset(\bgpop)$:
\begin{equation}
    \label{eq:utility_lower_bound}
    \forall \beta \in \Delta(\scenarioset(\bgpop)), \quad U(\pi^*_U, \beta) \geq U(\pi^*_U, \beta^*_U),
\end{equation}
where $\beta^*_U$ is the worst-case prior for $\pi^*_U$.
Simply put, $\pi^*_U$ performs the worst when the prior is its worst-case $\beta^*_U$, but can only improve when the prior deviates from $\beta^*_U$. Additionally, it is also optimal on the worst-case prior:
\begin{equation}
\label{eq:best_on_worst_case_prior}
    \forall \pi\in\policies, \quad U(\pi^*_U, \beta^*_U) \geq U(\pi, \beta^*_U).
\end{equation}
Note that this differs fundamentally from merely finding the best response to a fixed worst-case prior $\arg\max_{\pi} U(\pi, \beta^*_U)$, which once again, only has a guaranteed optimal utility on $\beta^*_U$.
\begin{corollary}[\citet{buening_minimax_bayes_reinforcement_2023}]
    \label{corollary:min_dirac}
    For any policy $\pi\in\policies$ and background population $\bgpop \subset \policies$, we have 
    \begin{equation}
        \min_{\beta\in\Delta(\scenarioset(\bgpop))} U(\pi, \beta)= \wcu(\pi, \scenarioset(\bgpop)). 
    \end{equation}
\end{corollary}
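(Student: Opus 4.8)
The plan is to observe that, for a fixed policy $\pi$, the map $\beta \mapsto U(\pi,\beta) = \sum_{\scenario} U(\pi,\scenario)\,\beta(\scenario)$ is affine (indeed linear) in $\beta$ on the probability simplex $\Delta(\scenarioset(\bgpop))$, so the claim reduces to the elementary fact that a weighted average of numbers is minimised by placing all the mass on the smallest one. I would prove the two inequalities separately. Note first that since $\bgpop$ is finite, $\scenarioset(\bgpop)$ is a finite, nonempty set, so the minimum defining $\wcu(\pi,\scenarioset(\bgpop)) = \min_{\scenario} U(\pi,\scenario)$ is attained, and the outer minimum over $\beta$ is a continuous (affine) function minimised over a compact set, hence also attained.

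For the lower bound, I would use that $\beta$ is a probability distribution: since $U(\pi,\scenario) \geq \wcu(\pi,\scenarioset(\bgpop))$ for every $\scenario$, we get
\begin{equation*}
U(\pi,\beta) = \sum_{\scenario} U(\pi,\scenario)\,\beta(\scenario) \;\geq\; \wcu(\pi,\scenarioset(\bgpop)) \sum_{\scenario} \beta(\scenario) = \wcu(\pi,\scenarioset(\bgpop)),
\end{equation*}
using $\sum_{\scenario}\beta(\scenario)=1$. As this holds for all $\beta \in \Delta(\scenarioset(\bgpop))$, taking the minimum over $\beta$ preserves it, giving $\min_{\beta} U(\pi,\beta) \geq \wcu(\pi,\scenarioset(\bgpop))$.

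For the upper bound, I would exhibit a feasible $\beta$ that attains the value. Let $\scenario^* \in \arg\min_{\scenario} U(\pi,\scenario)$, which exists by finiteness, and take the Dirac distribution $\beta = \delta_{\scenario^*} \in \Delta(\scenarioset(\bgpop))$. Then $U(\pi,\delta_{\scenario^*}) = U(\pi,\scenario^*) = \wcu(\pi,\scenarioset(\bgpop))$, so $\min_{\beta} U(\pi,\beta) \leq \wcu(\pi,\scenarioset(\bgpop))$. Combining the two inequalities yields the claimed equality. I do not expect any genuine obstacle here: the entire content is that the adversarial prior can be taken to be a point mass on the worst scenario, so the only things worth spelling out are the existence of both minima (finiteness of $\scenarioset(\bgpop)$ and compactness of the simplex) and the feasibility of the Dirac measure.
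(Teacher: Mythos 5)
Your proof is correct and takes essentially the same route as the paper: the paper's own proof is a one-line appeal to \citet{buening_minimax_bayes_reinforcement_2023} whose stated key observation is precisely your upper-bound step, namely that Dirac distributions on scenarios in $\scenarioset(\bgpop)$ lie in $\Delta(\scenarioset(\bgpop))$. You simply spell out in full the elementary linearity/averaging argument that the paper delegates to the citation, which is a perfectly valid (and self-contained) rendering of the same idea.
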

\begin{proof}
    This follows directly from the results of \citet{buening_minimax_bayes_reinforcement_2023}, using utility in place of regret and recognising that Dirac distributions associated with scenarios in $\scenarioset(\bgpop)$ are always contained in $\Delta(\scenarioset(\bgpop))$.
\end{proof}
\begin{lemma}
    For any background population $\bgpop \subset \policies$ and $\pi^*_U$ the policy solving the maximin utility game~\eqref{eq:mbmarl.maximin}, we have
    \begin{equation}
    \label{eq:optimal_worst_case_utility}
    \wcu(\pi^*_U, \scenarioset(\bgpop)) = \max_{\pi\in\policies} \wcu(\pi, \scenarioset(\bgpop)).
    \end{equation}
\end{lemma}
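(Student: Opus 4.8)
The plan is to prove the identity by chaining two results already in hand: the defining property of $\pi^*_U$ as a maximiser in the maximin game~\eqref{eq:mbmarl.maximin}, and Corollary~\ref{corollary:min_dirac}, which shows that minimising the Bayesian utility over priors recovers exactly the worst-case utility $\wcu$. Neither the minimax theorem nor the internal structure of the POMG will be needed beyond what those statements already provide.

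First I would unfold the definition of $\pi^*_U$. Since $\pi^*_U$ solves~\eqref{eq:mbmarl.maximin}, it lies in $\arg\max_{\pi\in\policies}\min_{\beta\in\Delta(\scenarioset(\bgpop))} U(\pi,\beta)$, and hence $\min_{\beta} U(\pi^*_U,\beta) = \max_{\pi\in\policies}\min_{\beta} U(\pi,\beta)$. The left-hand side of~\eqref{eq:optimal_worst_case_utility} is then simply the inner minimisation evaluated at $\pi^*_U$.

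Next I would invoke Corollary~\ref{corollary:min_dirac} pointwise. That corollary gives $\min_{\beta} U(\pi,\beta) = \wcu(\pi,\scenarioset(\bgpop))$ for every $\pi\in\policies$. Substituting $\pi=\pi^*_U$ rewrites $\min_{\beta} U(\pi^*_U,\beta)$ as $\wcu(\pi^*_U,\scenarioset(\bgpop))$, while substituting under the outer maximum rewrites $\max_{\pi}\min_{\beta} U(\pi,\beta)$ as $\max_{\pi\in\policies}\wcu(\pi,\scenarioset(\bgpop))$. Putting the two rewrites together yields $\wcu(\pi^*_U,\scenarioset(\bgpop)) = \max_{\pi\in\policies}\wcu(\pi,\scenarioset(\bgpop))$, which is the claim.

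The argument is essentially a substitution, so I do not expect a genuine obstacle; the one point requiring care is that Corollary~\ref{corollary:min_dirac} holds uniformly over all policies, so that the same identity may be applied both at the specific maximiser $\pi^*_U$ and underneath the maximisation over $\pi$. The existence of such a maximiser and the well-posedness of the inner minimisation are themselves guaranteed by the value theorem~\eqref{eq:maximin_value}.
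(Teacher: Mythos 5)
Your proposal is correct and matches the paper's own proof: both arguments combine the defining property $\min_{\beta} U(\pi^*_U,\beta) = \max_{\pi}\min_{\beta} U(\pi,\beta)$ with Corollary~\ref{corollary:min_dirac} applied both at $\pi^*_U$ and underneath the outer maximisation. No gap to report.
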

\begin{proof}
    By Corollary~\ref{corollary:min_dirac}, we can write that $\max_{\pi}\min_{\beta} U(\pi, \beta)=\max_{\pi}\wcu(\pi, \scenarioset(\bgpop))$.
    However, we also have $\max_{\pi}\min_{\beta} U(\pi, \beta) = \min_{\beta} U(\pi^*_U, \beta) = \wcu(\pi^*_U,\scenarioset(\bgpop))$.
\end{proof}
Thus, a policy solving the maximin utility game~\eqref{eq:mbmarl.maximin} is guaranteed to have an optimal worst-case utility on its training set.

% https://www.epfl.ch/labs/lia/wp-content/uploads/2023/01/multi-mdp.pdf

% Results
% -> 
\section{Utility or Regret?}
\label{seq:utility_or_regret}

Optimising for the worst-case utility \eqref{eq:mbmarl.maximin} might be problematic. Nature could resort to only picking scenarios where the focal players achieve the worst possible score. Then, the distribution trivially minimises utility for any chosen policy, preventing the latter to learn anything.
\citet{buening_minimax_bayes_reinforcement_2023} addresses this issue by instead considering the regret of a policy. The difference is that ‘impossible’ scenarios will always yield zero regret for any policy, thus becoming irrelevant for a regret-maximising nature. Letting $L(\pi, \beta) \defeq \sum_\sigma R(\pi, \mdp) \beta(\sigma)$ be the Bayesian regret with respect to a prior $\beta$, we now formulate the following minimax regret game:
\begin{equation}
    \label{eq:mbmarl.minimax}
    \min_{\pi \in \policies} \max_{\beta \in \Delta(\scenarioset(\bgpop))} L(\pi, \beta).
\end{equation}
One can also prove that this above game has a value. Moreover, a solution  ($\pi^*_R, \beta^*_R$) to \eqref{eq:mbmarl.minimax} exhibits properties analogous to those in equations~\eqref{eq:utility_lower_bound}, \eqref{eq:best_on_worst_case_prior} and \eqref{eq:optimal_worst_case_utility}, but in terms of regret. $\pi^*_R$ has its Bayesian regret upper-bounded by $L(\pi^*_R, \beta^*_R)$ on $\scenarioset(\bgpop)$. It is also optimal under the worst-case prior $\beta^*_R$ and achieves optimal worst-case regret $\wcr$ on $\scenarioset(\bgpop)$.

Should utility or regret be used as an objective? Exploiting regret ensures that scenarios on which you can improve the most are sampled more often. It also ensures that degenerate scenarios get discarded as their regret is always zero. However, it demands the calculation of best responses for each scenario, which becomes taxing as the number of scenarios or problem complexity grows.
    %\begin{equation*}
    %    |\Sigma(\mathcal{B})| = \sum_{c=1}^m \multiset{|\mathcal{B}|}{m-c} = \sum_{c=1}^m {|\mathcal{B}| + m - c \choose m-c}.
    %\end{equation*}
To reduce the computational burden, we can approximate those best responses, or subsample the set of scenarios.
An alternative way is to make use of the utility notion under some additional conditions.

\begin{definition}[Non-degenerative population]
        A background population of policies $\bgpop \subset \policies$ is non-degenerative if and only if for any scenario $\scenario\in \scenarioset(\bgpop)$, there exists two distinct policies $\pi_1$ and  $\pi_2 \in \policies, \pi_1 \neq \pi_2$ such that $U(\pi_1, \scenario) \neq U(\pi_2, \scenario)$.
    \end{definition}
\begin{lemma}
    If a background population $\bgpop\subset\policies$ is non-degenerative, then
    for any scenario $\scenario \in \scenarioset(\bgpop)$, there exists a policy $\pi \in \policies$ such that $R(\pi, \scenario) > 0$. 
    \begin{proof}
        $\bgpop$ is non-degenerative, for any scenario $\scenario \in \scenarioset(\bgpop)$ there must exist two policies $\pi_1$ and $\pi_2$ such that $U(\pi_1, \scenario) > U(\pi_2, \scenario)$. We have by definition $U^*(\scenario)\geq U(\pi_1, \scenario)$, hence $R(\pi_2, \scenario) > 0$.
    \end{proof}
\end{lemma}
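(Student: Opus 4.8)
The plan is to reduce the claim directly to the non-degenerative hypothesis and the definition of regret, arguing pointwise in the scenario. First I would fix an arbitrary scenario $\scenario \in \scenarioset(\bgpop)$; nothing in the statement requires uniformity across scenarios, so it suffices to produce one witness policy for this fixed $\scenario$. By the non-degenerative assumption, I can extract two distinct policies $\pi_1, \pi_2 \in \policies$ with $U(\pi_1, \scenario) \neq U(\pi_2, \scenario)$. Since the two utilities differ, exactly one of them is strictly smaller, so after relabelling if necessary I would take $\pi_1$ to be the policy with the larger utility, giving $U(\pi_1, \scenario) > U(\pi_2, \scenario)$.

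Next I would invoke the definition $U^*(\scenario) \defeq \max_{\pi\in\policies} U(\pi, \scenario)$. Because $U^*(\scenario)$ is the maximum over all policies, it dominates the utility of any particular policy, and in particular $U^*(\scenario) \geq U(\pi_1, \scenario)$. Chaining this with the strict inequality from the previous step yields
\begin{equation*}
R(\pi_2, \scenario) = U^*(\scenario) - U(\pi_2, \scenario) \geq U(\pi_1, \scenario) - U(\pi_2, \scenario) > 0,
\end{equation*}
so $\pi = \pi_2$ is exactly the witness required by the statement.

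There is essentially no hard step here: the result is an immediate consequence of the definitions of non-degeneracy and regret, and I expect no genuine obstacle. The only point that deserves a line of care is the well-definedness of $U^*(\scenario)$ as a true maximum rather than merely a supremum, since the policy class $\policies$ is infinite. This is not really a difficulty, however, because I only use the inequality $U^*(\scenario) \geq U(\pi_1, \scenario)$, which holds whether or not the maximiser is attained; and attainment itself follows from the reduction of stochastic policies to distributions over the finite deterministic set $\deterministicpolicies$ together with the linearity of $U$, precisely the device already used in the proof of the maximin-value corollary above.
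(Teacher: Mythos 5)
Your proof is correct and follows exactly the same route as the paper's: extract two policies with strictly ordered utilities from the non-degeneracy assumption, bound $U^*(\scenario)$ below by the larger one, and conclude $R(\pi_2,\scenario) > 0$ from the definition of regret. Your additional remark that only the inequality $U^*(\scenario) \geq U(\pi_1,\scenario)$ is needed (so attainment of the maximum is immaterial) is a sensible touch of care but does not change the argument.
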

Making the assumption that a background population is non-degenerative is in general realistic for cooperative tasks. This translates into only considering reasonable behaviors for the background population, or tasks where teammates cannot completely cancel out the actions of the focal players. Under the assumption of a non-degenerative background population, no distribution can deadlock the policy learner into stale scenarios. Hence, the utility-minimising opponent in Equation~\ref{eq:mbmarl.maximin} can no longer trivially minimise utility.
For the remainder of the paper, background populations are assumed to be non-degenerative. 

% No need to debate on who is needed more, we should explain in which condition utility can be used.

% argue that in some context, it is easy to compute the maximal utility for all scenarios.

%The two get different when the intervals of min-max utility between scenarios do not overlap fully.

%\begin{lemma}
%Under a fixed distribution $\beta$ over scenarios,  optimising a policy to minimise utility or maximise regret with respect to $\beta$ yields equivalent results: $\arg\max_{\pi \in \policies} U(\pi, \beta) = \arg\min_{\pi \in \policies} R(\pi, \beta)$.
%\end{lemma}

% This could be a subsection in 4, either in 4.1, or as 4.2.
\section{Out-Of-Distribution Robustness}
\label{section:ood}

As already stated in Section~\ref{subsec:constructing_training_scenarios}, having a diverse set of scenarios that adequately represents the true set of scenarios is crucial. However, since it is often impractical to perfectly replicate the true set, the prior used during training may not have the same support as the true distribution observed in nature. In such cases, the guarantees outlined in Section~\ref{subsec:minimax_bayes_aht} no longer hold on the true distribution. In order to state further robustness guarantees, an option is to assume that scenarios in the true scenario set are close to the training scenarios. To quantify the closeness between scenarios, we first define the distance between two policy vectors as their maximum total variation across all states:
\begin{equation}
    \label{eq:policy_vector_distance}
    d(\bpi, \bpi') = \max_{s\in \states }\sum_i \sum_a \big| \bpi_i(a|s) - \bpi'_i(a|s) \big|.
\end{equation}
We define the scenario distance as the minimum distance between policy vectors across permutations of the background policies:
\begin{equation}
    \label{eq:scenario_distance}
    d(\scenario, \scenario') = \min_{\bpi, \bpi' \in \text{Perm}(\bpi^b) \times \text{Perm}(\bpi^{b'})} d(\bpi, \bpi'),
\end{equation}
This metric measures the similarity between the background policies of two scenarios. Scenarios can only be compared if they have the same number of focal players (e.g., $\scenario=(c,\bpi^b)$ and $\scenario'=(c,\bpi^{b'})$).
\begin{definition}[$\epsilon$-net of a scenario set] 
    A finite set of scenarios $\scenarioset$ is called an $\epsilon$-net of a scenario set $S$ if and only if, for every scenario $\scenario\in S$, there exists a scenario $\scenario'\in\scenarioset$ such that $d(\scenario,\scenario')<\epsilon$.
\end{definition}

\begin{lemma}
    \label{lemma:scenario_equivalence}
    Let $\scenarioset$ be an $\epsilon$-net for a scenario set $S$. For any policy $\pi \in \policies$ and scenario $\scenario\in S$, there is a scenario $\scenario' \in \scenarioset$ that verifies:
    \begin{equation}
        \label{eq:performance.guarantee}
        \big| U(\pi, \scenario) - U(\pi, \scenario') \big| < \dfrac{\epsilon T^2\rmax}{2}.
    \end{equation}
\end{lemma}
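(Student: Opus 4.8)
The plan is to reduce the scenario distance to uniform one-step bounds on the induced rewards and transitions, and then accumulate these over the horizon with a simulation-lemma argument. Since the scenario distance only compares scenarios with the same number of focal players and both utilities are evaluated under the same focal policy $\pi$, the scenarios $\scenario = (c,\bpi^b)$ and $\scenario' = (c,\bpi^{b'})$ differ solely through their background policies. First I would fix the permutation of $\bpi^{b'}$ achieving the minimum in \eqref{eq:scenario_distance}, so that $\sum_i\sum_a|\pi^b_i(a\mid\cdot) - \pi^{b'}_i(a\mid\cdot)| < \epsilon$ at every decision point. Using the standard hybrid bound for the total variation of product measures, $\sum_{\abackground}\big|\prod_i\pi^b_i(\abackground_i) - \prod_i\pi^{b'}_i(\abackground_i)\big| \le \sum_i\sum_a|\pi^b_i(a) - \pi^{b'}_i(a)| < \epsilon$, and since $\transfunc_\scenario$ and $\rewardfunc_\scenario$ are obtained by marginalising over $\abackground$ against exactly this product measure, I would conclude the uniform one-step bounds $\sum_{s'}|\transfunc_\scenario(s'\mid s,\afocal) - \transfunc_{\scenario'}(s'\mid s,\afocal)| < \epsilon$ and $|\rewardfunc_\scenario(s,\afocal,i) - \rewardfunc_{\scenario'}(s,\afocal,i)| < \epsilon\rmax$.

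Next I would write the utility difference as a sum over time steps and split each term into a reward-mismatch part and a distribution-shift part. Writing $\mathbb{P}^\scenario_t$ for the law of the history up to the reward readout at step $t$ under $\pi$ and $\mdp(\scenario)$, and $\bar\rewardfunc_\scenario = \tfrac1c\sum_i\rewardfunc_\scenario(s_t,\afocal_t,i)$, we have
\begin{equation*}
U(\pi,\scenario) - U(\pi,\scenario') = \sum_{t=1}^\horizon\Big(\mathbb{E}_{\mathbb{P}^\scenario_t}[\bar\rewardfunc_\scenario] - \mathbb{E}_{\mathbb{P}^{\scenario'}_t}[\bar\rewardfunc_{\scenario'}]\Big).
\end{equation*}
For each $t$ I would add and subtract $\mathbb{E}_{\mathbb{P}^\scenario_t}[\bar\rewardfunc_{\scenario'}]$: the first piece is the reward mismatch evaluated under a single law, bounded by $\epsilon\rmax$ via the per-capita average of the step-reward bound; the second is $\mathbb{E}_{\mathbb{P}^\scenario_t}[\bar\rewardfunc_{\scenario'}] - \mathbb{E}_{\mathbb{P}^{\scenario'}_t}[\bar\rewardfunc_{\scenario'}]$, bounded by $\rmax\,\|\mathbb{P}^\scenario_t - \mathbb{P}^{\scenario'}_t\|_1$ since $|\bar\rewardfunc_{\scenario'}| \le \rmax$.

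The crux is controlling $\|\mathbb{P}^\scenario_t - \mathbb{P}^{\scenario'}_t\|_1$, which I would bound by induction on $t$. Because both processes use the same focal policy $\pi$ and the kernels $\pi$ and $\transfunc_\scenario$ are non-expansive in the $\ell_1$ norm, extending the history by one step contributes only the one-step transition gap, giving $\|\mathbb{P}^\scenario_{t+1} - \mathbb{P}^{\scenario'}_{t+1}\|_1 \le \|\mathbb{P}^\scenario_t - \mathbb{P}^{\scenario'}_t\|_1 + \epsilon$, and since the initial law is shared, $\|\mathbb{P}^\scenario_t - \mathbb{P}^{\scenario'}_t\|_1 < (t-1)\epsilon$. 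Combining the two parts gives a per-step bound on the order of $t\,\epsilon\rmax$, and summing over $t = 1,\dots,\horizon$ produces the quadratic-in-horizon bound $\tfrac12\epsilon\horizon^2\rmax$, the remaining-horizon weighting of the transition errors being what generates the $\horizon^2$ factor while the reward-mismatch terms contribute only at lower order. The main obstacle I anticipate is the non-Markovian nature of the marginalised scenarios: because policies condition on full histories, the propagation argument must be carried out over distributions on histories rather than states, and care is needed because the same background perturbation drives both the reward and the transition errors, so one must avoid double counting when assembling the final constant.
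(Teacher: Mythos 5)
Your proposal is correct in substance but accumulates the error over the horizon by a genuinely different argument than the paper's. The first stage is identical: you and the paper both fix the minimising permutation and use the telescoping (hybrid) bound on products of background policies to obtain the uniform one-step bounds $\sum_{s'} |P_\scenario(s'\mid s,\afocal) - P_{\scenario'}(s'\mid s,\afocal)| < \epsilon$ and $|\rewardfunc_\scenario(s,\afocal,i) - \rewardfunc_{\scenario'}(s,\afocal,i)| < \epsilon\rmax$. From there the paper runs a \emph{backward} induction on the tail values $U_t(\pi,\scenario,s)$, splitting each step into a reward-mismatch term, a transition-gap term weighted by $\lVert U_{t+1}\rVert_\infty \le (T-t)\rmax$, and a propagated value-difference term, to show $|U_t(\pi,\scenario,s) - U_t(\pi,\scenario',s)| < \tfrac{1}{2}\epsilon\rmax(T-t+1)(T-t)$. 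You instead propagate \emph{forward} the $\ell_1$ distance between history laws, $\lVert \mathbb{P}^\scenario_t - \mathbb{P}^{\scenario'}_t\rVert_1 < (t-1)\epsilon$, and charge $\rmax$ per unit of shift plus $\epsilon\rmax$ of per-step reward mismatch. The two accumulations are dual and give the same quadratic scaling. Your route is cleaner about the non-Markovian structure: the marginalised kernels genuinely depend on the background players' histories, so the paper's notation $U_t(\pi,\scenario,s)$ (a function of the state alone) is slightly abusive, whereas laws over full histories are the honest objects. The paper's route is more elementary, needing only a three-term split and an induction rather than non-expansiveness of history kernels. Also, your worry about double counting is unfounded: the step-$t$ reward mismatch and the step-$t$ transition gap affect disjoint parts of the return (the current reward versus all future rewards), which is why both routes land on the same constant.

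That constant deserves a remark. Summing your per-step bound gives $\sum_{t=1}^{T}\bigl(\epsilon\rmax + (t-1)\epsilon\rmax\bigr) = \tfrac{1}{2}\epsilon\rmax\, T(T+1)$, which is strictly larger than the stated $\tfrac{1}{2}\epsilon T^2 \rmax$; as written, your argument proves the lemma with $T(T+1)$ in place of $T^2$. You share this slip with the paper rather than introduce it: the paper's claimed inductive formula vanishes at $t=T$, contradicting its own base case $|U_T(\pi,\scenario,s) - U_T(\pi,\scenario',s)| < \epsilon\rmax$, and its inductive arithmetic reads $1 + k + \tfrac{1}{2}k(k-1) = \tfrac{1}{2}k(k+1) + 1$ with $k = T-t$, not $\tfrac{1}{2}k(k+1)$. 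Carried out correctly, both arguments yield $\tfrac{1}{2}\epsilon\rmax\, T(T+1)$; the $O(\epsilon T^2 \rmax)$ scaling and the downstream results (Lemmas~\ref{lemma:scenario_equivalence_regret}, \ref{lemma:wcu.guarantees} and \ref{lemma:wcr.guarantees}) survive with this marginally weaker constant.
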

\begin{proofsketch}
    The result is obtained by using the fact that for any pairs of $\epsilon$-close scenarios $\scenario, \scenario'$ and any $s, \afocal, i$, we have $\sum_{s'}|P_\scenario(s'| s, \afocal) - P_{\scenario'}(s'| s, \afocal) | < \epsilon$ and $|\rewardfunc_\scenario(s, \afocal, i) - \rewardfunc_{\scenario'}(s, \afocal, i)|\\< \epsilon\rmax$. The proof is concluded by showing by induction that for all $t$ and $s$ , $|U_t(\pi, \scenario, s)-U_t(\pi, \scenario', s)| < \frac{1}{2} \epsilon(T-t+1)(T-t)\rmax$.
\end{proofsketch}

\begin{lemma}
    \label{lemma:scenario_equivalence_regret}
    Let $\scenarioset$ be an $\epsilon$-net for some scenario set $S$. For any policy $\pi\in\policies$ and scenario $\scenario\in S$, there is a scenario $\scenario' \in \scenarioset$ such that
    \begin{equation}
        \label{eq:performance.regret}
        \big| R(\pi, \scenario) - R(\pi, \scenario') \big| < \epsilon T^2\rmax.
    \end{equation}
\end{lemma}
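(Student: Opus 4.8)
The plan is to reduce the regret bound to two applications of Lemma~\ref{lemma:scenario_equivalence}, joined by the triangle inequality. Fix a policy $\pi$ and a scenario $\scenario \in S$. Since $\scenarioset$ is an $\epsilon$-net, there is a scenario $\scenario' \in \scenarioset$ with $d(\scenario, \scenario') < \epsilon$, and crucially this single $\scenario'$ satisfies the conclusion of Lemma~\ref{lemma:scenario_equivalence} \emph{simultaneously for every} policy, because the approximation bound $|U(\tilde\pi, \scenario) - U(\tilde\pi, \scenario')| < \frac{\epsilon T^2 \rmax}{2}$ there depends on $\scenario, \scenario'$ only through their distance and not on $\tilde\pi$. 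This uniformity over policies is exactly what makes the regret argument go through.

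Next I would expand the regret difference using $R(\pi, \scenario) = U^*(\scenario) - U(\pi, \scenario)$ and split it into a difference of optimal values and a difference of utilities:
\begin{equation*}
\big| R(\pi, \scenario) - R(\pi, \scenario') \big| \leq \big| U^*(\scenario) - U^*(\scenario') \big| + \big| U(\pi, \scenario) - U(\pi, \scenario') \big|.
\end{equation*}
The second term is immediately bounded by $\frac{\epsilon T^2 \rmax}{2}$ by Lemma~\ref{lemma:scenario_equivalence} applied to the fixed policy $\pi$ and the chosen $\scenario'$. It remains to control the difference of optimal values $U^*$.

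For the first term I would invoke the standard fact that uniformly close functions have uniformly close maxima. Letting $\pi_\scenario^*$ and $\pi_{\scenario'}^*$ denote optimal policies in $\scenario$ and $\scenario'$ respectively (the maxima are attained, e.g.\ over the finite set $\deterministicpolicies$), applying the per-policy bound to $\pi_\scenario^*$ gives
\begin{equation*}
U^*(\scenario) = U(\pi_\scenario^*, \scenario) < U(\pi_\scenario^*, \scenario') + \tfrac{\epsilon T^2 \rmax}{2} \leq U^*(\scenario') + \tfrac{\epsilon T^2 \rmax}{2},
\end{equation*}
and the symmetric inequality with the roles of $\scenario$ and $\scenario'$ swapped yields $|U^*(\scenario) - U^*(\scenario')| < \frac{\epsilon T^2 \rmax}{2}$. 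Summing the two halves then produces $|R(\pi, \scenario) - R(\pi, \scenario')| < \epsilon T^2 \rmax$, the claimed factor-of-two weaker bound relative to the utility case.

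The only genuine subtlety — and the step I would be most careful about — is justifying that the \emph{same} net element $\scenario'$ serves both the fixed policy $\pi$ and the two a priori different optimal policies $\pi_\scenario^*, \pi_{\scenario'}^*$. This hinges on reading Lemma~\ref{lemma:scenario_equivalence} as supplying an $\epsilon$-close $\scenario'$ whose guarantee is uniform in the policy, and on the optima being attained so that the strict inequalities survive when passing to the maximum. Beyond that, no new estimate is required: everything reduces to the triangle inequality and the utility bound already established.
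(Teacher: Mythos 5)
Your proposal is correct and follows essentially the same route as the paper's proof: the same triangle-inequality decomposition $|R(\pi,\scenario)-R(\pi,\scenario')| \leq |U^*(\scenario)-U^*(\scenario')| + |U(\pi,\scenario)-U(\pi,\scenario')|$, with both terms controlled by Lemma~\ref{lemma:scenario_equivalence}; your evaluation at an optimal policy is just an unpacking of the paper's identity $|U^*(\scenario)-U^*(\scenario')| \leq \max_{\pi}|U(\pi,\scenario)-U(\pi,\scenario')|$. The uniformity-in-policy point you flag is indeed the (implicit) crux, and you handle it exactly as the paper does.
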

\begin{proofsketch}
    The result is obtained by both using the identity $|U^*(\scenario) - U^*(\scenario')| \leq \max_\pi |U(\pi, \scenario) - U(\pi, \scenario')|$
    and noticing that for any policy $\pi$,  $|R(\pi, \scenario)-R(\pi, \scenario')|\leq |U^*(\scenario)- U^*(\scenario')| + |U(\pi, \scenario) - U(\pi, \scenario')|$.
\end{proofsketch}

\begin{lemma}
    \label{lemma:wcu.guarantees}
    Let $\scenarioset$ be an $\epsilon$-net for some scenario set $S$, and $\pi^*_U$the optimal policy for the maximin utility problem \eqref{eq:mbmarl.maximin} on $\scenarioset$, then
    \begin{equation}
        \wcu(\pi^*_U, S) > \max_{\pi \in \policies} \left( \wcu(\pi, \scenarioset) - \dfrac{\epsilon T^2 \rmax}{2} \right). 
    \end{equation}
\end{lemma}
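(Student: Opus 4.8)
The plan is to chain the per-scenario transfer bound of Lemma~\ref{lemma:scenario_equivalence} with the worst-case optimality of $\pi^*_U$ on the net, which is exactly Equation~\eqref{eq:optimal_worst_case_utility}. First I would rewrite the right-hand side: since $\pi^*_U$ solves the maximin utility game on $\scenarioset$, Equation~\eqref{eq:optimal_worst_case_utility} gives $\max_{\pi\in\policies}\wcu(\pi,\scenarioset)=\wcu(\pi^*_U,\scenarioset)$, so it suffices to prove $\wcu(\pi^*_U,S) > \wcu(\pi^*_U,\scenarioset) - \epsilon T^2\rmax/2$. This turns the statement into a pure transfer bound for the single fixed policy $\pi^*_U$, removing the outer maximisation over $\policies$.

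Next I would fix an arbitrary $\scenario\in S$ and invoke Lemma~\ref{lemma:scenario_equivalence} with $\pi=\pi^*_U$ to obtain a net scenario $\scenario'\in\scenarioset$ satisfying $U(\pi^*_U,\scenario) > U(\pi^*_U,\scenario') - \epsilon T^2\rmax/2$. Because $\scenario'\in\scenarioset$, we have $U(\pi^*_U,\scenario') \geq \min_{\scenario''\in\scenarioset} U(\pi^*_U,\scenario'') = \wcu(\pi^*_U,\scenarioset)$, and hence $U(\pi^*_U,\scenario) > \wcu(\pi^*_U,\scenarioset) - \epsilon T^2\rmax/2$ for every $\scenario\in S$. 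Taking the minimum over $\scenario\in S$ on the left yields $\wcu(\pi^*_U,S) > \wcu(\pi^*_U,\scenarioset) - \epsilon T^2\rmax/2$, and substituting the rewritten right-hand side closes the argument.

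The one place that needs care is the strict inequality: each individual scenario gives a strict bound, but passing to an infimum over an infinite $S$ could collapse strict into non-strict. The cleanest route is to observe that the slack $\epsilon T^2\rmax/2$ is uniform across scenarios, so I would treat $\wcu(\,\cdot\,,S)$ as $\min_{\scenario\in S}$ consistent with the definition of $\wcu$ in Equation~\eqref{eq:metrics.utiliy}; whenever the worst case over $S$ is attained (as in the finite training/test protocol), the strict inequality carries over directly, and otherwise one recovers the harmless non-strict version. The main obstacle is therefore not the algebra, which is just substitution of Lemma~\ref{lemma:scenario_equivalence} and Equation~\eqref{eq:optimal_worst_case_utility}, but the infimum-versus-minimum bookkeeping on $S$ and the fact that the guarantee is stated relative to $\scenarioset$ rather than the (possibly unattainable) optimum on $S$.
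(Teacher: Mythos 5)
Your proposal is correct and follows essentially the same route as the paper's proof: both rewrite $\max_{\pi}\wcu(\pi,\scenarioset)$ as the utility of $\pi^*_U$ on its worst net scenario via the maximin optimality \eqref{eq:optimal_worst_case_utility}, then chain Lemma~\ref{lemma:scenario_equivalence} with the fact that any net scenario's utility is at least $\wcu(\pi^*_U,\scenarioset)$. Your version is in fact slightly tidier — it avoids the paper's two-case distinction on $d(\scenario_\text{wc}(\scenarioset),\scenario_\text{wc}(S))$ (whose second case already subsumes the first) and explicitly flags the infimum-attainment caveat for infinite $S$, which the paper glosses over by assuming a worst-case scenario $\scenario_\text{wc}(S)$ exists.
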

\begin{proofsketch}
    We denote $\scenario_\text{wc}(\scenarioset)$ and $\scenario_\text{wc}(S)$ the worst-case scenarios for $\pi^*_U$ on $\scenarioset$ and $S$, and reason on the distance between $\scenario_\text{wc}(\scenarioset)$ and $\scenario_\text{wc}(S)$. If $d(\scenario_\text{wc}(\scenarioset),\scenario_\text{wc}(S)') < \epsilon$, then Lemma~\ref{lemma:scenario_equivalence} applies.  Otherwise, since $\scenarioset$ is an $\epsilon$-net, we can find another scenario $\scenario_\epsilon \in \scenarioset$ that is $\epsilon$-close to $\scenario_\text{wc}(S)$ and use the fact that the utility of $\pi^*_U$ is by definition higher on $\scenario_\epsilon$ than on $\scenario_\text{wc}(\scenarioset)$.
\end{proofsketch}

\begin{lemma}
    \label{lemma:wcr.guarantees}
    Let $\scenarioset$ be an $\epsilon$-net for some scenario set $S$, and $\pi^*_R$ the optimal policy for the minimax regret problem \eqref{eq:mbmarl.minimax} on $\scenarioset$, then
    \begin{equation}
        \wcr(\pi^*_R, S) < \min_{\pi \in \policies} \big( \wcr(\pi, \scenarioset) + \epsilon T^2 \rmax \big).
    \end{equation}
\end{lemma}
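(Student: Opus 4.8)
The plan is to mirror the argument sketched for Lemma~\ref{lemma:wcu.guarantees}, but the regret direction actually makes things more direct: I only need to transfer the worst-case regret scenario from $S$ into the net $\scenarioset$ via Lemma~\ref{lemma:scenario_equivalence_regret}, and then invoke the optimality of $\pi^*_R$ on $\scenarioset$.

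First I would peel off the additive constant. Since $\epsilon T^2\rmax$ does not depend on $\pi$, we have $\min_{\pi\in\policies}\big(\wcr(\pi,\scenarioset)+\epsilon T^2\rmax\big)=\min_{\pi\in\policies}\wcr(\pi,\scenarioset)+\epsilon T^2\rmax$. Because $\pi^*_R$ solves the minimax regret game~\eqref{eq:mbmarl.minimax} on $\scenarioset$, the regret analogue of \eqref{eq:optimal_worst_case_utility} (asserted in the text following \eqref{eq:mbmarl.minimax}) gives $\wcr(\pi^*_R,\scenarioset)=\min_{\pi\in\policies}\wcr(\pi,\scenarioset)$. Hence the claim reduces to showing $\wcr(\pi^*_R,S)<\wcr(\pi^*_R,\scenarioset)+\epsilon T^2\rmax$.

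Next, let $\scenario_\text{wc}(S)\in S$ attain the worst-case regret, i.e.\ $R(\pi^*_R,\scenario_\text{wc}(S))=\wcr(\pi^*_R,S)$. Applying Lemma~\ref{lemma:scenario_equivalence_regret} to the policy $\pi^*_R$ and the scenario $\scenario_\text{wc}(S)\in S$ yields a net scenario $\scenario'\in\scenarioset$ with $\big|R(\pi^*_R,\scenario_\text{wc}(S))-R(\pi^*_R,\scenario')\big|<\epsilon T^2\rmax$. Bounding $R(\pi^*_R,\scenario')\le\max_{\scenario\in\scenarioset}R(\pi^*_R,\scenario)=\wcr(\pi^*_R,\scenarioset)$ then gives $\wcr(\pi^*_R,S)<\wcr(\pi^*_R,\scenarioset)+\epsilon T^2\rmax$, which combined with the previous step closes the argument.

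The step I expect to require the most care is justifying that applying Lemma~\ref{lemma:scenario_equivalence_regret} directly at $\scenario_\text{wc}(S)$ suffices. Unlike the utility case, no case split on whether the worst-case scenarios on $S$ and on $\scenarioset$ are $\epsilon$-close is needed, because Lemma~\ref{lemma:scenario_equivalence_regret} already supplies, for \emph{this particular} scenario of $S$, a uniformly $\epsilon$-close net scenario with a controlled regret gap, and the subsequent relaxation to $\max_{\scenario\in\scenarioset}R(\pi^*_R,\scenario)$ only loosens the inequality in the favourable direction. I would also double-check that the strict inequality of Lemma~\ref{lemma:scenario_equivalence_regret} is preserved through the chain, so that the final bound remains strict.
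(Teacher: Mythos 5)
Your proof is correct, and it is a mild but genuine streamlining of the paper's argument. Both proofs rest on the same two ingredients: Lemma~\ref{lemma:scenario_equivalence_regret} applied to the worst-case scenario $\scenario_\text{wc}(S)$, and the fact that minimax optimality of $\pi^*_R$ on $\scenarioset$ gives $\wcr(\pi^*_R,\scenarioset)=\min_{\pi\in\policies}\wcr(\pi,\scenarioset)$. The difference is structural: the paper splits into two cases according to whether the net scenario closest to $\scenario_\text{wc}(S)$ is the worst-case net scenario $\scenario_\text{wc}(\scenarioset)$ itself or some other $\scenario_\epsilon\in\scenarioset$, and in the second case invokes $R(\pi^*_R,\scenario_\epsilon)\leq R(\pi^*_R,\scenario_\text{wc}(\scenarioset))$. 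Your single bound $R(\pi^*_R,\scenario')\leq\max_{\scenario\in\scenarioset}R(\pi^*_R,\scenario)=\wcr(\pi^*_R,\scenarioset)$ subsumes both cases at once, so your observation that no case split is needed is right, and the resulting inequality chain
\begin{equation*}
\wcr(\pi^*_R,S)=R(\pi^*_R,\scenario_\text{wc}(S)) < R(\pi^*_R,\scenario')+\epsilon T^2\rmax \leq \wcr(\pi^*_R,\scenarioset)+\epsilon T^2\rmax
\end{equation*}
is cleaner; it also preserves strictness correctly, whereas the paper's write-up contains inverted inequality signs when it applies Lemma~\ref{lemma:scenario_equivalence_regret} inside each case (it states $R(\pi^*_R, \scenario_\text{wc}(S)) > R(\pi^*_R,\cdot) + \epsilon T^2 \rmax$ where $<$ is meant, though its final transitivity chain uses the intended direction). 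One point both proofs share and neither makes explicit: when $S$ is infinite, the worst-case scenario $\scenario_\text{wc}(S)$ may only be approached rather than attained, so strictly speaking one should either assume attainment or run the argument with a supremum and an arbitrarily small slack; this is a shared, minor gap, not a defect specific to your version.
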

\begin{proofsketch}
        We prove, analogically to Lemma~\ref{lemma:wcu.guarantees}, the result using Lemma~\ref{lemma:scenario_equivalence_regret} in place of Lemma~\ref{lemma:scenario_equivalence}.
\end{proofsketch}

Lemmas~\ref{lemma:wcu.guarantees} and \ref{lemma:wcr.guarantees} provide worst-case guarantees on arbitrary sets of scenarios, for policies solving the minimax problems. This also means that we can have those guarantees on non-finite sets of scenarios. Importantly, as long as we have an $\epsilon$-net of training scenarios for the true set, the policy solving the maximin utility (or minimax regret) problem has a strong worst-case utility (or regret) guarantee. In contrast, it is impossible to guarantee \emph{anything} additional about the average utility $\perf$ on the true set, as the latter could very well include scenarios that are all $\epsilon$-close to the worst-case scenarios of the training set. For this reason, the average utility on the true set can be as low as the worst-case utility.

\begin{algorithm}[t]

\caption{Background-Focal GDA}
\label{algorithm:full_information_gda}

\begin{algorithmic}[1]
\STATE \textbf{Input\,} 
Background policies $\mathcal{B}$, 
and
learning rates ($\eta_\pi$, $\eta_\beta$).
\STATE Simplex projector $\mathcal{P}$
\STATE Initialise the main policy parameters $\theta_0$ randomly
\STATE Initialise the distribution as uniform $\beta_0 = \mathcal{U}(\Sigma(\mathcal{B}))$ 

\FOR{$t=0, \dots, N-1$}

    %\STATE Compute induced MDP $\mu(\beta_t) = \langle S, A, P_{\beta_t}, \rho_{\beta_t}, \gamma \rangle$
    \STATE Compute $U(\pi_{\theta_t}, \sigma)$ for all $\sigma \in \Sigma(\mathcal{B})$
    \STATE Compute $U(\pi_{\theta_t}, \beta_t) = \sum_\sigma U(\pi_{\theta_t}, \sigma) \beta_t(\sigma)$
    \STATE Compute $R(\pi_{\theta_t}, \sigma) = U^*(\sigma) - U(\pi_{\theta_t}, \sigma)$ for all $\sigma \in \Sigma(\mathcal{B})$

    \STATE Obtain $L(\pi_{\theta_t}, \beta_t) = \sum_\sigma R(\pi_{\theta_t}, \sigma)\beta_t(\sigma)$

    \IF{solving maximin utility \eqref{eq:mbmarl.maximin}}
        \STATE Update distribution $\beta_{t+1} = \mathcal{P}(\beta_t -\eta_{\beta}\nabla_{\beta} U(\pi_{\theta_t}, \beta_t))$
    \ENDIF
    \IF{solving minimax regret \eqref{eq:mbmarl.minimax}}
        \STATE Update distribution $\beta_{t+1} = \mathcal{P}(\beta_t + \eta_{\beta}\nabla_{\beta} L(\pi_{\theta_t}, \beta_t))$
    \ENDIF
    \STATE Update policy parameters $\theta_{t+1} = \theta_t + \eta_{\theta}\nabla_{\theta} U(\pi_{\theta_t}, \beta_t)$
\ENDFOR

\RETURN $\theta^*, \beta^*$ uniformly sampled from $\{(\theta_1, \beta_1), \dots, (\theta_N, \beta_N)\}$
\end{algorithmic}
\end{algorithm}
\section{Computing Solutions}
\label{seq:learning}

We now desire to calculate the solution pairs for both the maximin utility \eqref{eq:mbmarl.maximin} and minimax regret \eqref{eq:mbmarl.minimax} games. \citet{buening_minimax_bayes_reinforcement_2023} theoretically proved that GDA has convergence guarantees when the game is played between a policy learned with softmax parameterisation and nature learning its distribution over a finite set of MDPs. These results apply if all scenarios induce single-agent POMGs, as partial observability does not interfere with proving the required properties. However, when the focal policy is deployed in a scenario with $c>1$ copies, the game is no longer single-agent. To approximate the reduction of these multi-agent POMGs to single-agent POMGs during training, we propose using delayed versions $\pi_{t-d}$ of the focal policy $\pi_{t}$ for the $c-1$ remaining copies. This common practice smooths the behavior of the copies and favours proper convergence by treating the copies as fixed policies.

Algorithm~\ref{algorithm:full_information_gda}, a GDA algorithm adapted to our setting, can be employed to learn solutions for both the maximin utility and minimax regret problems. Furthermore, in case the POMG is not known, one can straightforwardly adapt the algorithm to a stochastic version, by resorting to sampling scenarios and performing policy rollouts to estimate utility, regret, and gradients.

\section{Experiments}
\label{seq:experiments}
% What should we chose for baselines ?
% PSRO ? adversarial
% Fictitious Play -> OK
% Fictitious Co-Play -> OK (similar to our uniform baseline)
% Other-play ? -> potential
The aim of our experiments is to highlight the importance of partner distribution in the learning process. To achieve this, we evaluate our proposed strategies, Maximin Utility (MU) and Minimax Regret (MR), on two distinct problems. First, we consider the fully known and observable Iterated Prisoner's Dilemma to validate the theoretical results. Following this, we test our approaches on a deep-RL task, the Collaborative Cooking (Overcooked) game \citep{carroll_utility_learning_about_2019, strouse_collaboration_with_humans_2021, leibo_scalable_evaluation_multi_2021,agapiou_melting_pot_2_2023}.
We remind that we want to optimise policies with respect to the focal-per-capita return \eqref{eq:EU} rather than individual returns. Naturally, across all of our experiments, we reward focal policies with average focal step rewards. For each scenario $\scenario$, this is defined as $
    \rho^\text{train}_\sigma(s, \afocal, i) = \frac{1}{c} \sum_{j=1}^c \rho_\sigma(s, \afocal, j)
$.
Throughout our experiments, we benchmark against three distribution management strategies: 
\begin{itemize}[leftmargin=12pt]
    \item Population Best Response (PBR): uniform distribution over scenarios,
    \item Self-Play (SP): playing solely with copies of the focal policy,
    \item Fictitious Play (FP): sampling partners uniformly from the history of focal policies $\pi_0, \dots, \pi_t$.
\end{itemize}

%SP which fixes the prior as the Dirac distribution $\beta_\text{SP}(\scenario^\text{SP}) = 1$, FP which is similar to SP but has the versions of the copies sampled uniformly from the full history of policies $\pi_0, \dots, \pi_t$, and Population Best-Response (Population best response) which learns the best response to the training background population by maintaining a uniform prior $\beta_\text{Population best response} = \mathcal{U}(\scenarioset(\poptrain))$.

%Approaches are consistently evaluated on their training background population, as well as on separate test sets, in order to evaluate their AHT capabilities.

\subsection{Iterated Prisoner's Dilemma}
\label{subsec:prisoners}
In these experiments, all computations can be exact. This includes the gradient calculation for the prior, as well as for the agent's policies. We focus on the Iterated Prisoner's Dilemma, where two players play the matrix game (Table~\ref{tab:prisoners_matrix}) repeatedly for $\horizon=3$ rounds. 
\paragraph{Experimental Setup.}
In the Iterated Prisoner's Dilemma, players receive and observe rewards based on their chosen actions, specified by the payoff matrix. The game has one state, and the outcomes observed are enough to determine the joint actions, making it fully observable. We learn softmax, fully adaptive policies, where actions depend on the entire history of observations and actions.
Unlike other experiments, we do not use the approach described in Section~\ref{subsec:constructing_training_scenarios}. Instead, the learner interacts with a background population $\poptrain$ composed of nine ad-hoc policies, such as pure cooperation/defection, tit-for-tat, cooperate-until-defected, and fully random.
For AHT assessment, we adequately construct a test background population $\poptest$, ensuring that its scenarios are $\epsilon$-close to those in the training phase. Specifically, we uniformly sample $512$ stochastic policies that are $\epsilon$-close to the policies in the training background population, setting $\epsilon = 0.5$.

\begin{table}[t]
    \centering
    \caption{Payoff matrix of the Prisoner's Dilemma.}
    \label{tab:prisoners_matrix}
    \begin{tabular}{c|c:c}
         & Cooperate  & Defect \\
         \hline
        Cooperate & $(4, 4)$ & $(0, 5)$\\
        \hdashline
        Defect  & $(5, 0)$ & $(1, 1)$\\
    \end{tabular}
\end{table}
\begin{table}[t]
    \small
    \setlength{\tabcolsep}{3pt}
    \centering
    \caption{
    Scores on the Iterated Prisoner's Dilemma. A higher value is desired for average utility ($\perf$) and worst-case utility ($\wcu$), while a lower worst-case regret ($\wcr$) is better.}
    \label{tab:prisoner.train}
    \begin{tabular}{l||ccc||ccc||} %\toprule
    & \multicolumn{3}{c||}{$\scenarioset(\poptrain)$} & \multicolumn{3}{c||}{$\scenarioset(\poptest)$}\\
    \cmidrule(lr){2-4}\cmidrule(lr){5-7}
               & $\perf$ & $\wcu$ & $\wcr$ & $\perf$ & $\wcu$ & $\wcr$ \\ \midrule
        Maximin Utility (MU) & 7.69 & \textbf{3.00} & 9.00 & \textbf{8.34} & \textbf{3.00} & 9.00\\
        Minimax Regret (MR)  & 8.23 & 2.26 & \textbf{3.79} & 7.96 & 2.78 & \textbf{4.35}\\ \hdashline
        Population Best Response (PBR) & \textbf{8.54} & 2.00 & 4.97 & 8.07 & 2.48 & 5.63\\
        Fictitious Play (FP)& 7.06 & 0.14 & 10.56 & 6.33 & 0.56 & 9.96\\
        Self-Play  (SP)& 7.25 & 0.47 & 10.19 & 6.49 & 0.86 & 9.65\\
        %TfT  & \textbf{10.0} & 2.00 & 11.73 & 2.00 & 3.00 & 7.60 & 2.01 & ?\\
        %CuD  & \textbf{10.0} & 2.00 & 11.73 & 2.00 & 3.00 & 7.85 & 2.01 & ?\\
        Random & 7.40 & 1.50 & 5.50 & 7.47 & 2.18 & 5.20\\
    % \bottomrule
    \end{tabular}
\end{table}

\paragraph{Results.} 
Table~\ref{tab:prisoner.train} summarises the performance on both training and test sets. As expected, on the training set, PBR performs the best under the uniform prior ($\perf$), MU has the highest worst-case utility ($\wcu$), and MR exhibits the lowest worst-case regret ($\wcr$). On the test set however, MU outperforms PBR in terms of average utility and maintains the highest worst-case utility, while MR continues to excel in minimising worst-case regret. These results indicate that best responses to populations does not ensure
the best robustness to new partners. SP and FP agents as well, overfit to their established conventions, leading to poor transferability across training and test policies.
Figure~\ref{fig:prisoners.binary_trees} visualises the learned policies under different distribution regimes, highlighting their disparity. For example, the population best response is a strategy close to "cooperate-until-defected", while MU's policy heavily favors defection. 
Crucially, this figure points to a potential improvement for future work: during optimisation, the worst-case distribution can force policies onto a narrow subset of scenarios, leaving others unexplored. Due to the MU regime, the policy is forced to face a pure defecting opponent, its worst-case scenario, entirely cutting its exposure to cooperative strategies. As seen in Figure~\ref{subfig:prisoners.mu}, the policy does not know what to do if the opponent chooses to collaborate. This suggests that it is possible to have a policy with equal worst-case utility but improved worst-case regret and overall performance.

\input{data/prisoners/binary_trees}

\subsection{Robust AHT on Collaborative Cooking}
\label{subsec:cooking}
For this section, we tackle the Collaborative Cooking game \citep{agapiou_melting_pot_2_2023}, where two players act as chefs in a gridworld kitchen, working together to deliver as many tomato soup dishes as possible within a set time. Each have to collect tomatoes, cook them, prepare dishes, and deliver the soup. Successful deliveries reward both players equally.\footnote{When training background policies specifically, we only assign a reward of 20 points to the player who delivers the dish, and 1 point to players who contribute by placing a tomato into the pot. This incentivises diverse behaviour generation.} Players must navigate the kitchen, interact with objects in the right order, and coordinate with each other. Each player has a local, partial RGB view of the environment. All of our policies in this section are using deep recurrent (LSTM) neural networks.

%\paragraph{The Collaborative Cooking (Overcooked) Environment \cite{carroll_utility_learning_about_2019, strouse_collaboration_with_humans_2021, leibo_scalable_evaluation_multi_2021,agapiou_melting_pot_2_2023}.} In this game, players act as chefs in a gridworld kitchen, working together to deliver as many tomato soup dishes as possible within a set time. To do so, they have to collect tomatoes, cook them, prepare dishes, and deliver the soup. Successful deliveries reward both players equally. Players must navigate the kitchen, interact with objects in the right order, and coordinate with each other. Each player has an egocentric, partial RGB view of the environment. We evaluate approaches on two different kitchen layouts: circuit and cramped \cite{agapiou_melting_pot_2_2023}.

\begin{figure}
    \centering
    \begin{subfigure}[t]{0.48\linewidth}
    \includegraphics[width=\linewidth]{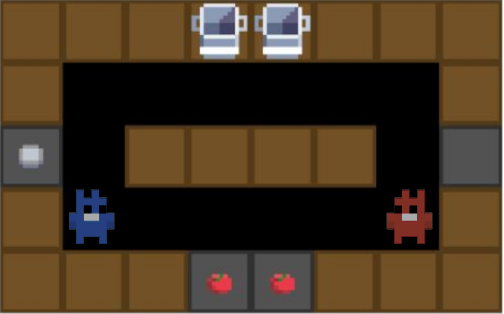}
    \caption{Circuit.}
    \end{subfigure}
    \begin{subfigure}[t]{0.48\linewidth}
    \includegraphics[width=\linewidth]{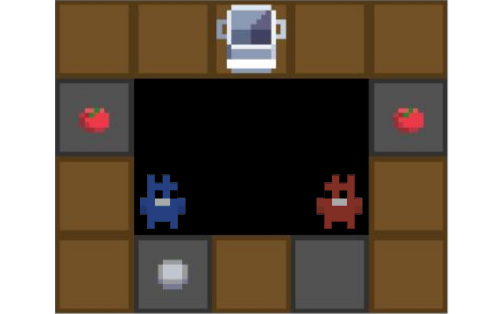}
    \caption{Cramped.}
    \end{subfigure}
    \vspace{-0.1cm}
    \caption{We consider two kitchen layouts in the Collaborative Cooking environment. The players must collaborate to deliver tomato soups without blocking each other. Both players collect a reward of 20 each time a soup is delivered.}
    \label{fig:layouts}
\end{figure}
\begin{table*}[t]
    %\footnotesize
    \small
    \setlength{\tabcolsep}{3pt}
    \centering
    \caption{Scores on the Collaborative Cooking environment training and test sets. The standard error is taken over three random seeds. The scores are aggregated over the two kitchen layouts. Lower worst-case regret $\wcr$ is better. Scores in italic are reported from \citet{agapiou_melting_pot_2_2023}, which were not obtained on the exact same setting.}
    \label{tab:cooking.train}
    \begin{tabular}{l||ccc||ccc||ccc||}%\toprule
    & \multicolumn{3}{c||}{$\scenarioset(\poptrain)$} & \multicolumn{3}{|c||}{$\scenarioset(\poptest)$} & \multicolumn{3}{c||}{Melting pot scenarios} \\
    \cmidrule(lr){2-4}\cmidrule(lr){5-7}\cmidrule(lr){8-10}
               & $\perf$ & $\wcu$ & $\wcr$ & $\perf$ & $\wcu$ & $\wcr$ & $\perf$ & $\wcu$ & $\wcr$ \\ \midrule
        Maximin Utility (MU) & \textbf{266.9 {\footnotesize $\pm$ 4.3}} & \textbf{225.3 {\footnotesize$\pm$ 11.5}} & 266.0 {\footnotesize$\pm$ 7.9} & \textbf{195.7 {\footnotesize$\pm$ 6.2}} & \textbf{66.0 {\footnotesize$\pm$ 6.8}} & 266.4 {\footnotesize$\pm$ 10.1} & \textbf{273.8 {\footnotesize$\pm$ 4.9}} & \textbf{224.9 {\footnotesize$\pm$ 7.1}} & \textbf{118.0 {\footnotesize$\pm$ 7.1}} \\
        Minimax Regret (MR) & 232.0 {\footnotesize$\pm$ 18.6} & 144.3 {\footnotesize$\pm$ 14.4} & \textbf{230.7 {\footnotesize$\pm$ 28.2}} & 172.2 {\footnotesize$\pm$ 15.4} & 65.1 {\footnotesize$\pm$ 16.0} & \textbf{248.2 {\footnotesize$\pm$ 28.4}} & 206.8 {\footnotesize$\pm$ 12.6} & 148.7 {\footnotesize$\pm$ 9.1} & 187.1 {\footnotesize$\pm$ 13.0} \\ \hdashline
        Population Best Response (PBR) & 209.7 {\footnotesize$\pm$ 23.9} & 96.8 {\footnotesize$\pm$ 13.4} & 357.6 {\footnotesize$\pm$ 16.1} & 151.4 {\footnotesize$\pm$ 19.5} & 33.6 {\footnotesize$\pm$ 5.9} & 327.1 {\footnotesize$\pm$ 14.0} & 171.8{\footnotesize $\pm$ 21.1} & 106.3 {\footnotesize$\pm$ 9.3} & 228.1 {\footnotesize$\pm$ 5.8} \\
        Fictitious Play (FP) & 129.9 {\footnotesize$\pm$ 13.9} & 0.2 {\footnotesize$\pm$ 0.1} & 483.5 {\footnotesize$\pm$ 16.1} & 152.2 {\footnotesize$\pm$ 16.8} & 16.7 {\footnotesize$\pm$ 11.7} & 369.7 {\footnotesize$\pm$ 19.7} & 121.5 {\footnotesize$\pm$ 15.7} & 40.7 {\footnotesize$\pm$ 16.3} & 294.8 {\footnotesize$\pm$ 10.7} \\
        Self-Play (SP) & 124.8 {\footnotesize$\pm$ 26.4} & 15.7 {\footnotesize$\pm$ 10.5} & 460.8 {\footnotesize$\pm$ 21.7} & 117.4 {\footnotesize$\pm$ 12.5} & 6.7 {\footnotesize$\pm$ 3.5} & 367.5 {\footnotesize$\pm$ 11.6} & 101.2 {\footnotesize$\pm$ 
17.8} & 29.0 {\footnotesize$\pm$ 17.4} & 293.4 {\footnotesize$\pm$ 14.5} \\
        Random & 42.8 {\footnotesize$\pm$ 0.0} & 0.0 {\footnotesize$\pm$ 0.0} & 505.4 {\footnotesize$\pm$ 0.0} & 32.2 {\footnotesize$\pm$ 0.0} & 0.0 {\footnotesize$\pm$ 0.0} & 445.0 {\footnotesize$\pm$ 0.0} & 60.6 {\footnotesize$\pm$ 0.0} & 0.0 {\footnotesize$\pm$ 0.0} & 307.3 {\footnotesize$\pm$ 0.0} \\ 
        PP-ACB & n/a & n/a & n/a & n/a & n/a & n/a & \textit{82.4 {\footnotesize$\pm$ 0.0}} & \textit{0.0 {\footnotesize$\pm$ 0.0}} & \textit{307.3 {\footnotesize$\pm$ 0.0}} \\
        PP-OPRE & n/a & n/a & n/a & n/a & n/a & n/a & \textit{102.3 {\footnotesize$\pm$ 0.0}} & \textit{14.6 {\footnotesize$\pm$ 0.0}} & \textit{292.7 {\footnotesize$\pm$ 0.0}} \\
        PP-VMPO & n/a & n/a & n/a & n/a & n/a & n/a & \textit{78.6 {\footnotesize$\pm$ 0.0}} & \textit{36.1 {\footnotesize$\pm$ 0.0}} & \textit{306.7 {\footnotesize$\pm$ 0.0}} \\
    %\bottomrule
    \end{tabular}
\end{table*}

\begin{figure}
    \centering
    \begin{subfigure}[t]{\linewidth}
    \includegraphics[width=0.498\linewidth]{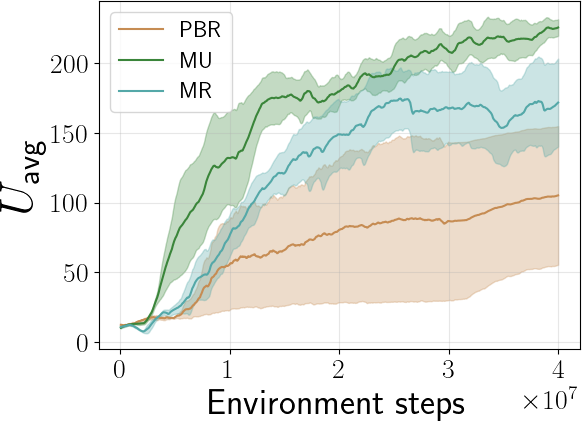}
    \includegraphics[width=0.498\linewidth]{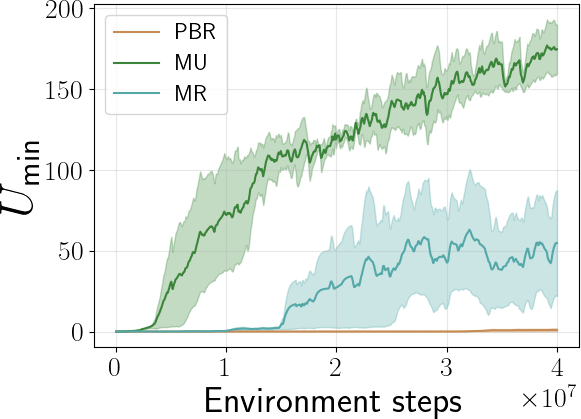}
    \caption{Circuit. \label{fig:cooking:learning_curves.circuit}}
    \end{subfigure}
    \begin{subfigure}[t]{\linewidth}
    \includegraphics[width=0.498\linewidth]{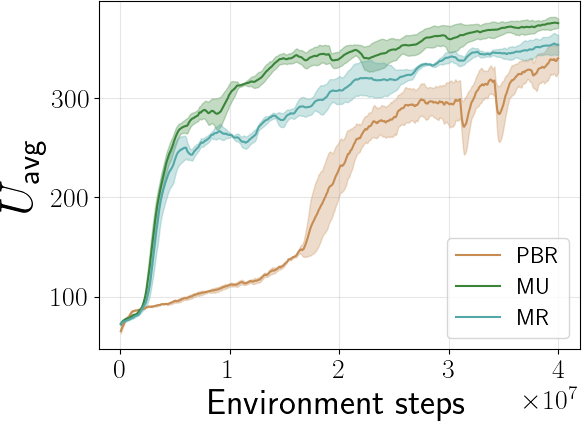}
    \includegraphics[width=0.498\linewidth]{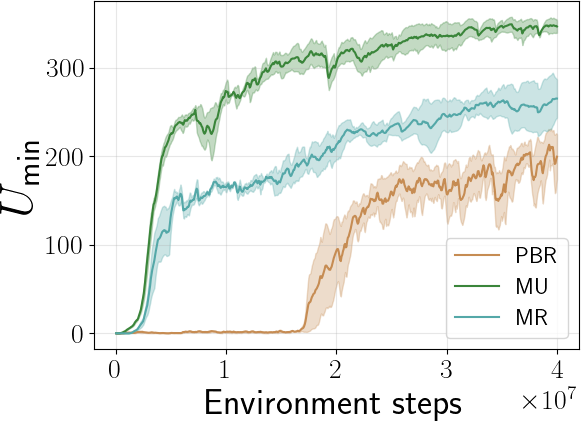}
    \caption{Cramped.}
    \end{subfigure}
    \caption{Learning curves of the average and worst-case utility metrics over the training set (averaged over 3 runs). The standard error is shown in shaded colour.}
    \label{fig:cooking.learning_curves}
\end{figure}

\paragraph{Experimental Setup.} Two separate background populations, $\poptrain$ and $\poptest$, are generated according to Section~\ref{subsec:constructing_training_scenarios}. Both populations are trained with an identical setup, differing only in their seed. Each is partitioned into four sub-populations of sizes 2, 3, and 5, totaling 10 policies. Prosociality and risk-aversion for all background policies are sampled uniformly in $[-0.2, 1.2]$ and $[0.1,2]$ respectively.\footnote{Those intervals were chosen empirically to ensure diverse and cooperative policies.}
For a fair comparison and to focus on scenario distribution learning, we assume that $\poptrain$ is readily available to all approaches, which can be exploited for a maximum of $4 \times 10^7$ environment steps to learn a policy with PPO \citep{schulman_proximal_policy_optimization_2017}. We run each training procedure on three different seeds (solely $\poptrain$ and $\poptest$ remain consistent across runs). We evaluate the approaches on two different kitchen layouts: Circuit and Cramped \citep{agapiou_melting_pot_2_2023}, which are visualised in Figure~\ref{fig:layouts}. 
On top of our own test scenarios, we assess the learned policies on the Melting Pot benchmark scenarios, comparing them against the baselines reported in the original paper \citep{agapiou_melting_pot_2_2023}: an Actor-Critic Baseline (ACB), V-MPO \citep{song_vmpo_on_policy_2019}, and OPRE \citep{vezhnevets_options_responses_grounding_2020}.\footnote{Since their baseline policies are not publicly available, we were unable to evaluate them on our scenarios, which explains the missing values in Table~\ref{tab:cooking.train}.} These baselines were trained for $10^9$ steps without access to our background policies. In each scenario (including the Melting Pot scenarios), regret was computed by estimating best responses with PPO for $10^7$ steps. Lastly for MU and MR, we run a stochastic version of Algorithm~\ref{algorithm:full_information_gda}, and constrain the learned distribution to keep sampling a random scenario with probability $0.05$, effectively allowing us to maintain utility estimates across all scenarios.

\paragraph{Results.} The results in Table~\ref{tab:cooking.train} clearly show that MU outperforms all other evaluated methods. Looking at the robustness metrics, the utility formulation has the best worst-case utility ($\wcu$) overall and the best worst-case regret ($\wcr$) on the Melting Pot scenarios. MR also performs better than any other benchmarked method overall, securing the lowest worst-case regrets on both the training set and our own test set. In terms of average performance ($\perf$), MU and MR are consistently the best and second-best, respectively, which is particularly notable on the training set where PBR was expected to perform the best.

One possible explanation for the globally lower performance of the regret approach compared to utility is that the best responses for training scenarios are too approximate. Figure~\ref{fig:cooking.learning_curves} suggests another hypothesis for why MU and MR considerably outperform leaning population best responses and other approaches: their scenario distributions during training have a similar effect to curriculum learning, introducing indirect exploration in behaviours compared to fixed distributions, and a smoother learning curve. In fact, they appear to speed up learning. In contrast, Figure~\ref{fig:cooking:learning_curves.circuit} shows that training against a uniform distribution of scenarios fails to improve in at least one scenario on the Circuit layout, with a worst-case utility close to 0. 

%Finally, Figure~\ref{fig:cooking.learning_curves_prior} highlights the consistency of the stochastic GDA algorithm, with little variation in learned distributions between runs. It also displays the clear differences between distributions learned by Maximin utility and Minimax regret on both kitchen layouts.

\section{Conclusion}

We explored how to compute robust adaptive policies for Ad Hoc Teamwork. Building on Minimax-Bayes RL, we introduced a method to identify minimax distributions over background populations, which consistently yielded more robust policies compared to training with a uniform distribution. Surprisingly, we also found that training on minimax distributions can significantly accelerate learning.
However, utilising regret as an objective to tune the training distribution proved computationally expensive when best-response utilities are not readily available. In some special instances, we also found that the minimax-Bayes training approach w.r.t.\ utility can prevent policies from learning certain game dynamics.  

Looking forward, we see great potential in extending our approach to a curriculum learning framework based on partner distributions, which could dramatically improve sample efficiency, asymptotic performance, and AHT robustness. 

% --- 

% We investigated how to obtain robust adaptive policies in an AHT setting. Leveraging previous work on Minimax-Bayes RL, we propose a method to find worst-case distributions over background populations, which led to consistently robust policies compared to simply training policies using a uniform distribution over the background population. On one hand, we remarked that exploiting regret is computationally demanding when best-response utilities are not readily available, and observed that worst-case distributions can prevent policies from learning certain aspects of the game. On the other hand, we have found the unexpected results that these distributional choices can significantly accelerate learning, on top of improved AHT robustness and performance. 

% Looking ahead, we see strong potential in extending our approach to a curriculum-learning framework based on partner distributions. This approach could dramatically enhance sample efficiency, asymptotic performance, and robustness.

% \input{sections/limitations}
%\input{sections/future}

\section*{Acknowledgements} 
Thomas Kleine Buening is supported by the UKRI Prosperity Partnership Scheme (Project FAIR).

%\begin{acks}
%?
%\end{acks}

%%%%%%%%%%%%%%%%%%%%%%%%%%%%%%%%%%%%%%%%%%%%%%%%%%%%%%%%%%%%%%%%%%%%%%%%

%%% The next two lines define, first, the bibliography style to be 
%%% applied, and, second, the bibliography file to be used.

\bibliographystyle{ACM-Reference-Format} 
\bibliography{ref}

\onecolumn
\appendix
\section*{Appendix}

\section{Algorithms}
\label{appendix:algorithms}
We provide an implementation for solving the maximin utility/minimax regret problems when the game is unknown, in Algorithm~\ref{algorithm:unknown_mdp_gda}.

\begin{algorithm}[h]

\caption{Background-Focal SGDA}
\label{algorithm:unknown_mdp_gda}

\begin{algorithmic}[1]
\STATE \textbf{Input\,} 
set of background policies $\mathcal{B}$, batch size $B$,
learning rates ($\eta_\pi$, $\eta_\beta$)

\STATE Simplex projector $\mathcal{P}$
\STATE Initialise randomly the main policy parameters $\theta_0$
\STATE Initialise the belief as the uniform distribution over possible scenarios $\beta_0 = \mathcal{U}(\Sigma(\mathcal{B}))$  

\FOR{$t=0, \dots, N-1$}

    %\STATE Compute induced MDP $\mu(\beta_t) = \langle S, A, P_{\beta_t}, \rho_{\beta_t}, \gamma \rangle$
    \STATE Sample $B$ scenarios $\sigma_1, \dots, \sigma_B \sim \beta_t$
    \STATE Estimate $\hat U(\pi_{\theta_t}, \sigma_i)$ by deploying $\pi_{\theta_t}$ on $\sigma_i$, for  $i = 1, \dots, B$
    \STATE Compute $\hat U(\pi_{\theta_t}, \beta_t) = \dfrac{1}{B} \sum_{i=1}^{B} \hat U(\pi_{\theta_t}, \sigma_i)$
    
    \STATE Compute $\hat R(\pi_{\theta_t}, \sigma_i) = U^*(\sigma_i) - \hat U(\pi_{\theta_t}, \sigma_i)$ for each  $i = 1, \dots, B$

    \STATE Compute $\hat L(\pi_{\theta_t}, \beta_t) = \dfrac{1}{B} \sum_{i=1}^{B} \hat R(\pi_{\theta_t}, \sigma_i)$
    \IF{solving maximin utility}
        \STATE Update belief $\beta_{t+1} = \mathcal{P}(\beta_t - \eta_{\beta}\nabla_\beta \hat U(\pi_{\theta_t}, \beta_t))$
    \ENDIF
    \IF{solving minimax regret}
    \STATE Update belief $\beta_{t+1} = \mathcal{P}(\beta_t + \eta_{\beta}\nabla_\beta \hat L(\pi_{\theta_t}, \beta_t))$
    \ENDIF
    
    \STATE Update policy parameters $\theta_{t+1} = \theta_t + \eta_{\theta}\nabla_\theta \hat U(\pi_{\theta_t}, \beta_t)$
\ENDFOR

\RETURN $\theta^*, \beta^*$ uniformly at random from $\{(\theta_1, \beta_1), \dots, (\theta_N, \beta_N)\}$
\end{algorithmic}
\end{algorithm}

\section{Ommitted proofs}

\begin{proof}[Proof of Lemma~\ref{lemma:scenario_equivalence}]
Let $\scenario$ be a scenario in $S$. $\scenarioset$ is an $\epsilon$ net for $S$, then there must exist a scenario $\scenario'$ such that $d(\scenario, \scenario')<\epsilon$. 
This implies for the induced transition function that for any $\afocal, s$:
        \begin{align}
            \sum_{s'}\big|P_\scenario(s'| s, \afocal) - P_{\scenario'}(s'| s, \afocal) \big| & \leq \sum_{s'}\sum_{\abackground} P(s'| s, \afocal, \abackground) \big|\prod_i{\bpi}_i^b((\abackground_i \mid h_i)-\prod_i{\bpi'}_i^b(\abackground_i \mid h_i) \big| \\
            &\leq \sum_{\abackground} \sum_{s'} P(s'| s, \afocal, \abackground) \sum_i \big|{\bpi}_i^b(\abackground_i \mid h_i) - {\bpi'}_i^b(\abackground_i \mid h_i) \big|  \\
            &= \sum_{\abackground} \sum_i \big|{\bpi}_i^b((\abackground_i \mid h_i) - {\bpi'}_i^b(\abackground_i \mid h_i) \big|\sum_{s'} P(s'| s, \afocal, \abackground) \\
            &= \sum_{\abackground} \sum_i \big|{\bpi}_i^b((\abackground_i \mid h_i) - {\bpi'}_i^b(\abackground_i \mid h_i) \big| \\
            & < \epsilon.
            %&\leq  (m-c)\epsilon \sum_{s'}\sum_{\abackground} P(s'| s, \afocal, \abackground) = (m-c)\epsilon |\actions|^{(m-c)}
        \end{align}
Similarly with the induced reward function, we also have for any $\afocal, s,  i$: 
        \begin{align}
            \label{eq:reward_similarity}
            \big|\rewardfunc_\scenario(s, \afocal, i) - \rewardfunc_{\scenario'}(s, \afocal, i) \big| & \leq \sum_{\afocal} |\rewardfunc(s, \afocal, \abackground, i)| \sum_j \big| {\bpi}_j^b((\abackground_j \mid h_j)-{\bpi'}_j^b((\abackground_j \mid h_j) \big| \\ 
            & \leq \rmax \sum_{\abackground} \sum_j \big| {\bpi}_j^b((\abackground_j \mid h_j)-{\bpi'}_j^b((\abackground_j \mid h_j) \big| \\
            & < \epsilon\rmax,
        \end{align}
        where $\rmax$ denotes the maximal absolute step reward.

        We use the notation $U_t(\pi, \scenario, s)$ to denote the utility of a policy $\pi$ when $s_t=s$, with $T-t$ timesteps remaining until the episode terminates. We will prove by induction that for any $t \in [1\dots T]$ and any $s$:
        \begin{equation}
             |U_t(\pi, \scenario, s)-U_t(\pi, \scenario', s)| < \epsilon\rmax (T-t+1)(T-t)/2.
        \end{equation}
        At $t = T$, we have:
        \begin{align}
         & |U_T(\pi, \sigma, s) - U_T(\pi, \sigma', s)| \\
         &= |\sum_{\afocal} \pi(\afocal, h_T) \dfrac{1}{c}\sum_i^c\big[\rewardfunc_\scenario(s, \afocal, i) - \rewardfunc_{\scenario'}(s, \afocal, i)\big]| \\
         & < \epsilon\rmax.
        \end{align}
        For some $t \in [1\dots T]$, assume that the inductive hypothesis holds at $t+1$: $|U_{t+1}(\pi, \scenario, s)-U_{t+1}(\pi, \scenario', s)| \leq \epsilon\rmax(T-t)(T-t-1)$. We have:
        \begin{align}
         |U_t(\pi, \sigma, s) - U_t(\pi, \sigma', s)| & \leq \sum_{\afocal} \pi(\afocal, h_t)
         (\dfrac{1}{c}\sum_i^c \big| \rewardfunc_\sigma(s, \afocal, i) - \rewardfunc_{\sigma'}(s, \afocal, i)\big|) \\
         & + \sum_{s'\in \states} \big|P_\scenario(s'|\afocal, s) U_{t+1}(\pi, \scenario, s') - P_{\scenario'}(s'|\afocal, s) U_{t+1}(\pi, \scenario', s') \big| \\
         & < \sum_{\afocal} \pi(\afocal, h_t) \Big[\epsilon\rmax \\
         &+ \sum_{s'\in \states} | P_\scenario(s'|\afocal, s) - P_{\scenario'}(s'|\afocal, s) | |U_{t+1}(\pi, \scenario, s')| \\
         &+ \sum_{s'\in \states}P_{\scenario'}(s'|\afocal, s)|U_{t+1}(\pi, \scenario, s') - U_{t+1}(\pi, \scenario', s')| \Big] \\
         & < \sum_{\afocal} \pi(\afocal, h_t) \epsilon\rmax\Big[1 + (T-t) + \dfrac{1}{2}(T-t)(T-t-1) \Big] \\
         & = \dfrac{1}{2}\epsilon\rmax(T-t+1)(T-t).
        \end{align}
        This proves by induction that for any $t \in [1\dots T]$ and $s$, $|U_t(\pi, \scenario, s)-U_t(\pi, \scenario', s)| < \epsilon\rmax (T-t+1)(T-t)/2$.
        The stated result is recovered with $t=1$.
    \end{proof}

\begin{proof}[Proof of Lemma~\ref{lemma:scenario_equivalence_regret}]
    Let $\scenario$ be a scenario from $S$. $\scenarioset$ is an $\epsilon$-net for $S$, therefore there is a scenario $\scenario'\in\scenarioset$ such that $d(\scenario, \scenario') < \epsilon$.
    We begin by writing that for any policy $\pi\in\policies$, $U(\pi, \scenario) \leq |U(\pi, \scenario) - U(\pi, \scenario')| +  U(\pi, \scenario')$. Taking the maximum over policies on both sides, we get
    \begin{align}
        U^*(\scenario) &\leq \max_\pi |U(\pi, \scenario) - U(\pi, \scenario')| +  U(\pi, \scenario') \\
        & \leq \max_\pi |U(\pi, \scenario) - U(\pi, \scenario')| + U^*(\scenario')\\
        & \leq \dfrac{\epsilon T^2\rmax}{2} + U^*(\scenario'),
    \end{align}
    where the last inequality is obtained through Lemma~\ref{lemma:scenario_equivalence}.
    Hence moving $U^*(\scenario')$ to the left hand side, we get $U^*(\scenario)- U^*(\scenario') \leq \epsilon T^2\rmax/2 $. As the above logic also holds by swapping the roles of $\scenario$ and $\scenario'$, we can further note that $|U^*(\scenario)- U^*(\scenario')| \leq \epsilon T^2\rmax/2 $.
    Finally, by the definition of regret, it holds for any policy $\pi\in\policies$ that
    \begin{align}
        |R(\pi, \scenario)-R(\pi, \scenario')| &\leq |U^*(\scenario)- U^*(\scenario')| + |U(\pi, \scenario) - U(\pi, \scenario')| \\
        & < \epsilon T^2\rmax,
    \end{align}
    which concludes the proof.
\end{proof}

\begin{proof}[Proof of Lemma~\ref{lemma:wcu.guarantees}]
     Let $\scenario_\text{wc}(\scenarioset)$ and $\scenario_\text{wc}(S)$ be the worst-case scenarios for $\pi^*_U$ on $\scenarioset$ and $S$, respectively: $\wcu(\pi^*_U, \scenarioset) = U(\pi^*_U, \scenario_\text{wc}(\scenarioset))$ and $\wcu(\pi^*_U, S) = U(\pi^*_U, \scenario_\text{wc}(S))$. Because $\pi^*_U$ is the solution of the maximin utility problem \eqref{eq:mbmarl.maximin} on $\scenarioset$, we can also write that $U(\pi^*_U, \scenario_\text{wc}(\scenarioset)) = \max_\pi \wcu(\pi, \scenarioset)$. Now, because $\scenarioset$ is an $\epsilon$-net for $S$, we have two cases: 
    \begin{enumerate}
        \item $d(\scenario_\text{wc}(\scenarioset), \scenario_\text{wc}(S)) < \epsilon$. Lemma~\ref{lemma:scenario_equivalence} can be applied, obtaining $U(\pi^*_U, \scenario_\text{wc}(S)) > U(\pi^*_U, \scenario_\text{wc}(\scenarioset)) - \epsilon T^2 \rmax/2$.
        
        \item $d(\scenario_\text{wc}(\scenarioset), \scenario_\text{wc}(S)) \geq \epsilon$, meaning that there is another scenario $\scenario_\epsilon \in \scenarioset$ such that  $d(\scenario_\epsilon, \scenario_\text{wc}(S)) < \epsilon$.
        Since we have $U(\pi^*_U, \scenario_\text{wc}(S)) > U(\pi^*_U,\scenario_\epsilon) - \epsilon T^2 \rmax/2$ through Lemma~\ref{lemma:scenario_equivalence}, and because of the definition of $\scenario_\text{wc}(\scenarioset)$ we have $U(\pi^*_U, \scenario_\epsilon)\geq U(\pi^*_U, \scenario_\text{wc}(\scenarioset))$, the desired result is obtained by transitivity:
        \begin{align}
            \wcu(\pi^*_U, S) &= U(\pi^*_U, \scenario_\text{wc}(S))\\
            &> U(\pi^*_U,\scenario_\epsilon) - \dfrac{\epsilon T^2\rmax}{2}\\
            & \geq U(\pi^*_U, \scenario_\text{wc}(\scenarioset)) - \dfrac{\epsilon T^2\rmax}{2}\\
            & = \max_\pi \wcu(\pi, \scenarioset) - \dfrac{\epsilon T^2\rmax}{2}.
        \end{align}
    \end{enumerate}
\end{proof}

\begin{proof}[Proof of Lemma~\ref{lemma:wcr.guarantees}]
         Let $\scenario_\text{wc}(\scenarioset)$ and $\scenario_\text{wc}(S)$ be the worst-case scenarios for $\pi^*_R$ on $\scenarioset$ and $S$, respectively: $\wcr(\pi^*_R, \scenarioset) = R(\pi^*_R, \scenario_\text{wc}(\scenarioset))$ and $\wcr(\pi^*_R, S) = R(\pi^*_R, \scenario_\text{wc}(S))$. Because $\pi^*_R$ is the solution of the minimax regret problem \eqref{eq:mbmarl.minimax} on $\scenarioset$, we can also write that $R(\pi^*_R, \scenario_\text{wc}(\scenarioset)) = \min_\pi \wcr(\pi, \scenarioset)$. Now, because $\scenarioset$ is an $\epsilon$-net for $S$, we have two cases: 
    \begin{enumerate}
        \item $d(\scenario_\text{wc}(\scenarioset), \scenario_\text{wc}(S)) < \epsilon$. Lemma~\ref{lemma:scenario_equivalence_regret} can be applied, obtaining $R(\pi^*_R, \scenario_\text{wc}(S)) > R(\pi^*_R, \scenario_\text{wc}(\scenarioset)) + \epsilon T^2 \rmax$.
        
        \item $d(\scenario_\text{wc}(\scenarioset), \scenario_\text{wc}(S)) \geq \epsilon$, meaning that there is another scenario $\scenario_\epsilon \in \scenarioset$ such that  $d(\scenario_\epsilon, \scenario_\text{wc}(S)) < \epsilon$.
        Since we have $R(\pi^*_R, \scenario_\text{wc}(S)) > R(\pi^*_R,\scenario_\epsilon) + \epsilon T^2 \rmax$ through Lemma~\ref{lemma:scenario_equivalence_regret}, and because of the definition of $\scenario_\text{wc}(\scenarioset)$ we have $R(\pi^*_R, \scenario_\epsilon)\leq R(\pi^*_R, \scenario_\text{wc}(\scenarioset))$, the desired result is obtained by transitivity:
        \begin{align}
            \wcr(\pi^*_R, S) &= R(\pi^*_R, \scenario_\text{wc}(S))\\
            &< R(\pi^*_R,\scenario_\epsilon) + \epsilon T^2\rmax\\
            & \leq R(\pi^*_R, \scenario_\text{wc}(\scenarioset)) + \epsilon T^2\rmax\\
            & = \min_\pi \wcr(\pi, \scenarioset) + \epsilon T^2\rmax.
        \end{align}
    \end{enumerate}
\end{proof}

\section{Additional experimental results}

\paragraph{Repeated Prisoner's Dilemma}
In this section of the experiments, we utilised nine specific ad-hoc policies to create the set of training partners $\poptrain$. The set was constructed as follows:
\begin{itemize}
    \item A pure cooperative policy: always chooses to cooperate.
    \item A pure defecting policy: always chooses to defect.
    \item Two tit-for-tat policies: \begin{itemize}
        \item One that begins with cooperation and then mimics the opponent's last action in subsequent rounds.
        \item Another that starts with defection and also replicates the opponent's last action thereafter.
    \end{itemize}
    \item Two tat-for-tit policies: \begin{itemize}
        \item One that starts by cooperating and subsequently takes the opposite action of the opponent's last move.
        \item Another that begins by defecting and then does the same.
    \end{itemize}
    \item A cooperate-until-defected policy: cooperates until it encounters a defection from its opponent.
    \item A defect-until-cooperated policy: defects until it observes cooperation from its opponent.
    \item A fully random policy: selects actions randomly, without any strategy.
\end{itemize}
We provide a plot of the performance/robustness metrics of the learned policies in function of how novel the test scenarios are in Figure~\ref{fig:prisoners.epsilon_plot}. As test scenarios become more distinct from the training ones, PBR experiences the largest performance drop. In comparison, MR's worst-case regret shows only a slight increase, and MU maintains consistent worst-case utility. There reason for MU's policy constant worst-case utility/regret is that its worst-case scenario in both cases is the universalisation scenario (where it plays against itself) for all values of $\epsilon$. No matter how different are the test policies from the training policies, one policy's utility/regret on the universalisation scenario won't change (there is no background player in this scenario).
\begin{figure}
    \centering
    \begin{subfigure}[t]{0.51\linewidth}
    \includegraphics[width=\linewidth]{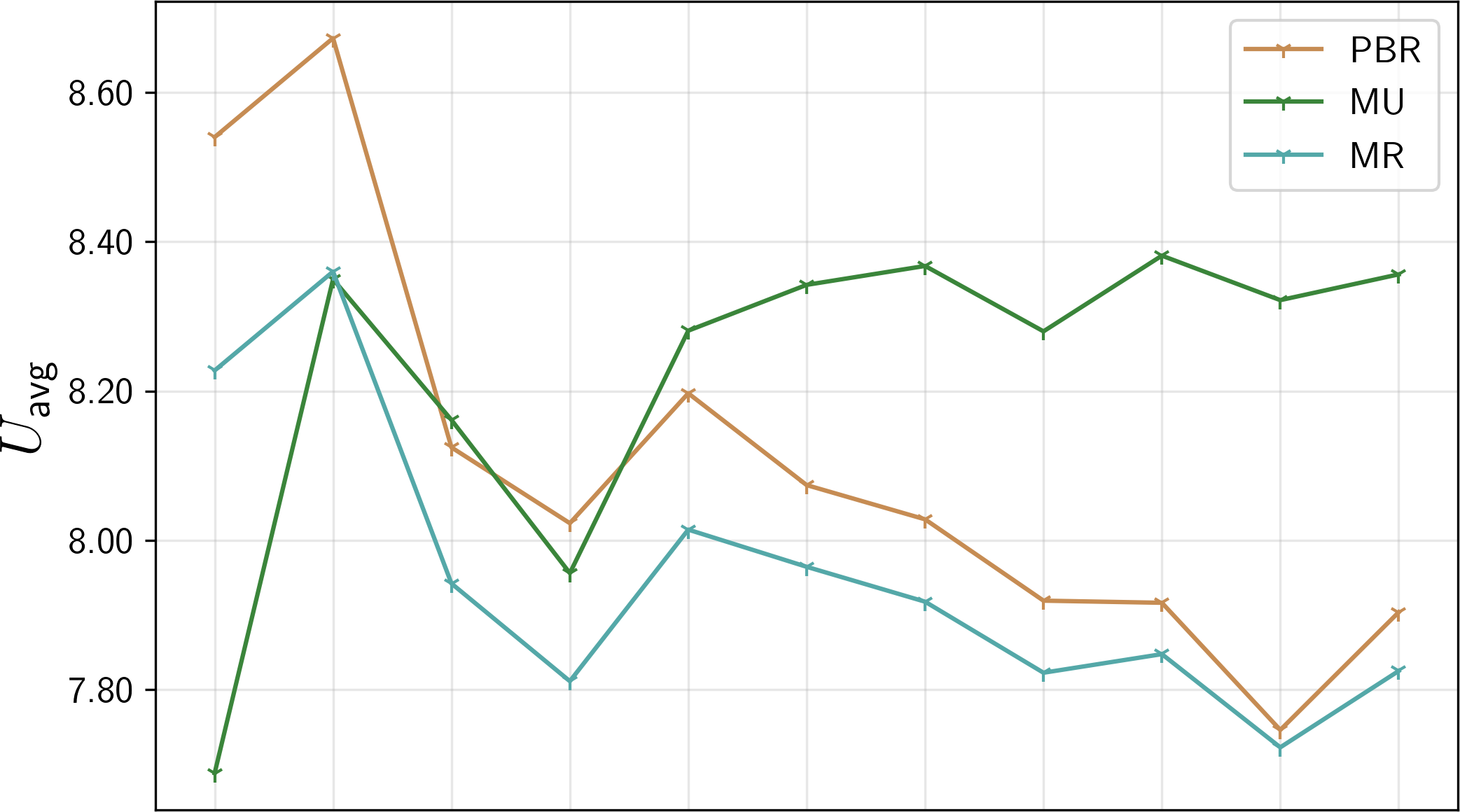}
    \end{subfigure}
    \begin{subfigure}[t]{0.51\linewidth}
    \includegraphics[width=\linewidth]{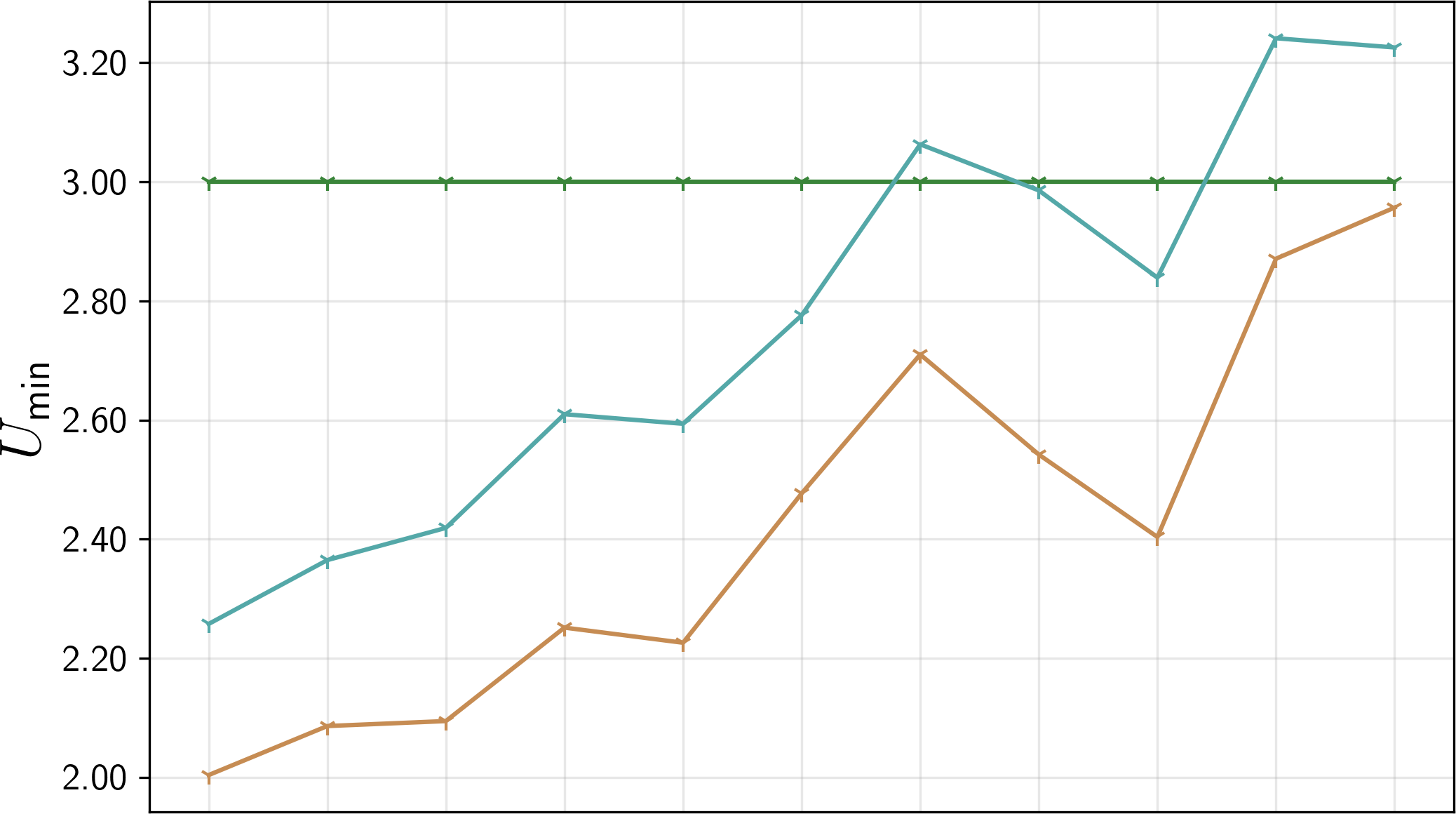}
    \end{subfigure}
    \begin{subfigure}[t]{0.51\linewidth}
    \includegraphics[width=\linewidth]{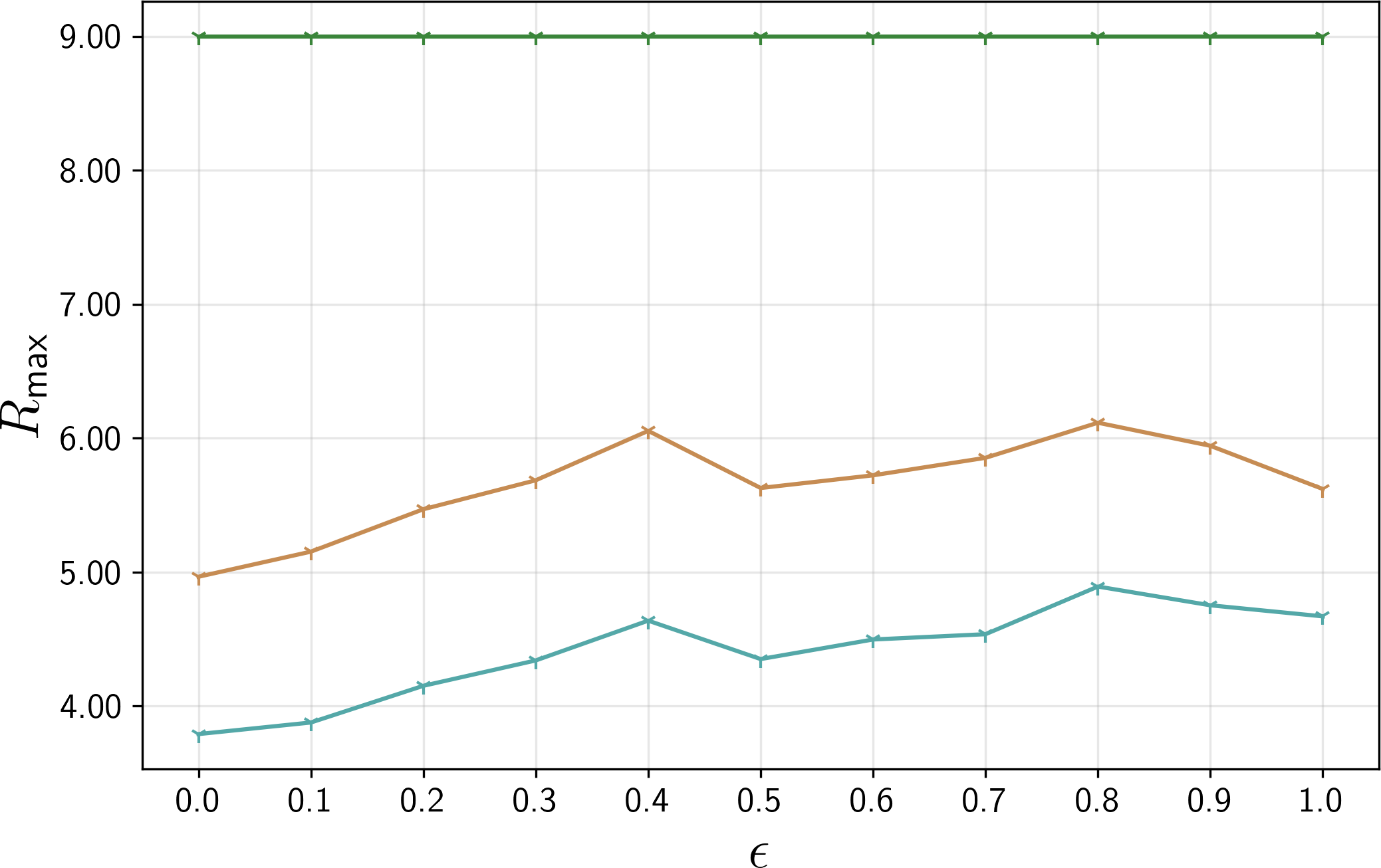}
    \end{subfigure}
    \caption{Average utility, worst-case utility and worst-case regret on the test set, in function of the $\epsilon$-net width (1 run).}
    \label{fig:prisoners.epsilon_plot}
\end{figure}

\paragraph{Robust AHT on Collaborative Cooking}

We plotted the learning curves of distributions when trying to solve the maximin utility/minimax regret problems with Algorithm~\ref{algorithm:unknown_mdp_gda} on Collaborative Cooking, in Figure~\ref{fig:cooking.learning_curves_prior}. This displays the clear differences between distributions learned by MU and MR on both kitchen layouts. The low variance between runs also highlights the consistency of the stochastic GDA Algorithm~\ref{algorithm:unknown_mdp_gda}. Note that the plots are highly smoothed, as the scenario distribution fluctuate a lot more than the policy.
\begin{figure}
    \centering
    \begin{subfigure}[t]{0.48\linewidth}
    \includegraphics[width=\linewidth]{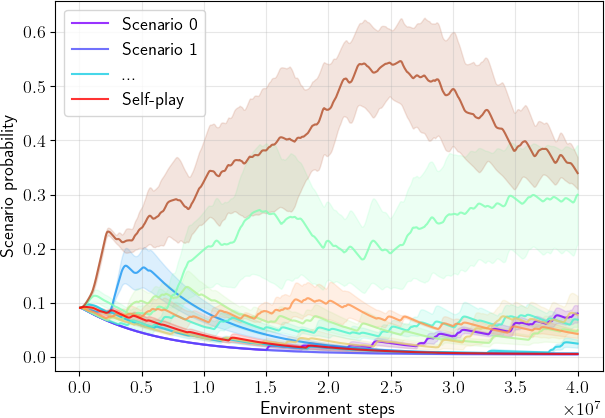}
    \caption{MU, Circuit layout.}
    \end{subfigure}
    \begin{subfigure}[t]{0.48\linewidth}
    \includegraphics[width=\linewidth]{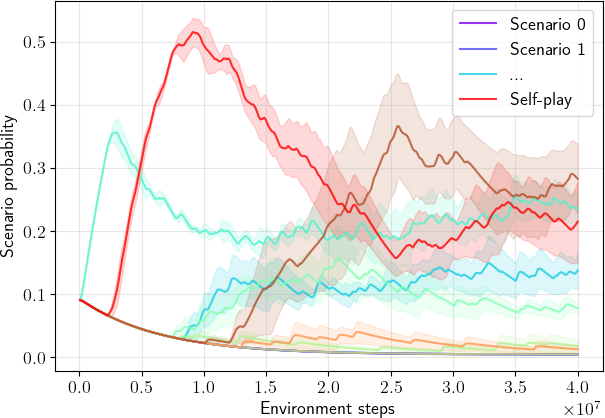}
    \caption{MR, Circuit layout.}
    \end{subfigure}
        \begin{subfigure}[t]{0.48\linewidth}
    \includegraphics[width=\linewidth]{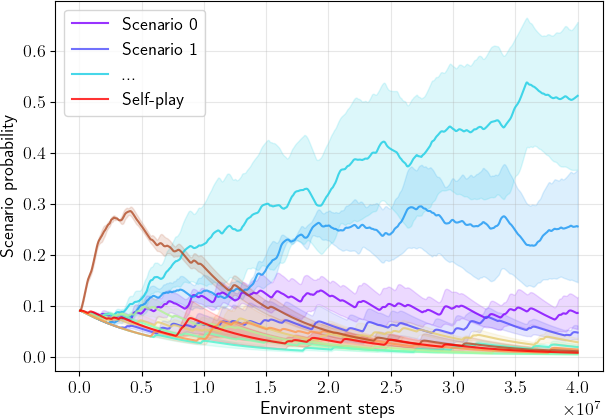}
    \caption{MU, Cramped layout.}
    \end{subfigure}
    \begin{subfigure}[t]{0.48\linewidth}
    \includegraphics[width=\linewidth]{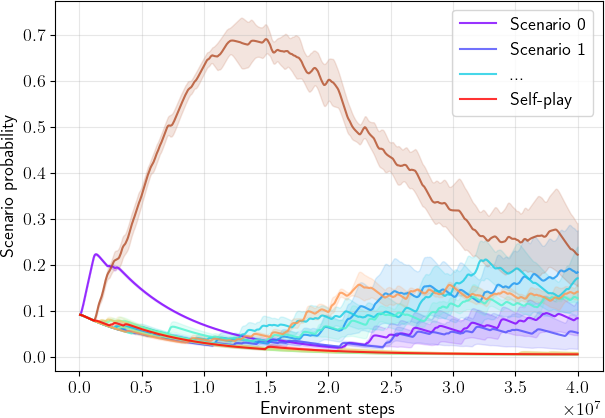}
    \caption{MR, Cramped layout.}
    \end{subfigure}
    \caption{Smoothed learning curves of the distribution over the training scenarios (averaged over 3 runs).}
    \label{fig:cooking.learning_curves_prior}
\end{figure}

\section{Additional experimental details}

\paragraph{Robust AHT on Collaborative Cooking}
To facilitate the training of our policies in Collaborative Cooking, we used a shaping pseudo-reward of $1$ when tomatoes were placed in the cooking pot. For the background policies, we further altered the reward function to restrict delivery rewards to the player that delivered. Combining this new reward function with varying levels of prosociality and risk-aversion helped the background policies adopt diversified ways to solve the game.

The architecture for the agents consisted of a convolutional network with two layers, having 16 and 32 output channels, kernel shapes of 8 and 4, and strides of 8 and 1, respectively. The output of the convNet was concatenated with the previous action taken before being passed into a dense layer of size 256 and an LSTM with 256 units. Policy and baseline (for the critic) were produced by linear layers connected to the output of the LSTM. Notably, to increase stability, we made the critic output a separate value for each training scenario.

We chose PPO to train our policies, using the Adam optimiser with a learning rate of $2 \times 10^{-4}$, a discount factor of $0.99$, a GAE lambda of $0.95$, a KL coefficient of $1.0$ with a KL target of $0.01$, and a PPO clipping parameter of $0.3$. Gradients were clipped at 4.0. We did not employ entropy regularisation. PPO was set to run 2 epochs per batch, each containing $64000$ samples, with minibatches of $1000$ samples each. Finally, the unroll length for the LSTM was set at $20$.

For the prior, we used a learning rate of $0.4$. 

\paragraph{Implementation}
The code used for all of our experiments is available at \hyperlink{https://github.com/AkiEl9/minimax_bayes_aht}{\url{https://github.com/AkiEl9/minimax_bayes_aht}}.

%%%%%%%%%%%%%%%%%%%%%%%%%%%%%%%%%%%%%%%%%%%%%%%%%%%%%%%%%%%%%%%%%%%%%%%%
\end{document}